\documentclass[10pt]{article}

\usepackage{graphicx} % Required for inserting images

\usepackage{enumitem}

\usepackage{geometry}
\AtBeginDocument{
  \newgeometry{
    textheight=9in,
    textwidth=5.5in,
    top=1in,
    headheight=12pt,
    headsep=25pt,
    footskip=30pt
  }
}

\usepackage{tikz}
\usepackage{pgfplots}
\pgfplotsset{compat=1.18}
\usepgfplotslibrary{groupplots}
\usepgfplotslibrary{fillbetween}

\usepackage{amsmath, amsthm, bm, amssymb, amsfonts}
\usepackage{physics}
\usepackage{thmtools}
\usepackage{apptools}
\AtAppendix{\counterwithin{theorem}{section}}

\usepackage{setspace}

\usepackage{microtype}
\usepackage{xcolor}

\usepackage[T1]{fontenc}
\usepackage{mathpazo}

\usepackage{hyperref}
\usepackage[round]{natbib}
\usepackage{algorithm}
\usepackage{algorithmicx}
\usepackage{algpseudocode}

\usepackage{booktabs}
\usepackage{wrapfig}
\usepackage{caption}
\usepackage{cleveref}

\crefname{assume}{assumption}{assumptions}
\Crefname{assume}{Assumption}{Assumptions}

\crefname{prop}{proposition}{propositions}
\Crefname{prop}{Proposition}{Propositions}

\crefname{condition}{condition}{conditions}
\Crefname{condition}{Condition}{Conditions}

\usetikzlibrary{decorations.markings} % <--- 1. Load library
\usetikzlibrary{arrows.meta}
\usetikzlibrary{shapes}

\tikzset{
  myarrow/.style={
    -{Latex[length=2mm, width=2.5mm]}, % controls arrowhead size
    line width=0.7pt,
    shorten >=1pt,
  }
}

\tikzset{gnode/.style={}}

\usetikzlibrary{arrows.meta}
\tikzset{
  myarrowb/.style={
    {Latex[length=2mm, width=2.5mm]}-{Latex[length=2mm, width=2.5mm]}, % controls arrowhead size
    line width=0.9pt,
    shorten >=1pt,
    shorten <=1pt
  }
}

\newtheorem{example}{Example}

\newtheorem{prop}[example]{Proposition}

\newtheorem{assume}[example]{Assumption}
\newtheorem{theorem}[example]{Theorem}
\newtheorem{condition}[example]{Condition}

\newcommand\ve{\varepsilon}

\newcommand\rvc{\mathbold{C}}
\newcommand\rve{\mathbold{\varepsilon}}
\newcommand\rvx{\mathbold{X}}
\newcommand\rvz{\mathbold{Z}}
\newcommand\rvy{\mathbold{Y}}

\newcommand\veca{\mathbold{a}}
\newcommand\vecb{\mathbold{b}}
\newcommand\vt{\mathbold{t}}

\newcommand\vx{\mathbold{x}}
\newcommand\vy{\mathbold{y}}
\newcommand\vz{\mathbold{z}}
\newcommand\vzero{\mathbold{0}}

\newcommand\vmu{\mathbold{\mu}}

\newcommand\vphi{\mathbold{\phi}}
\newcommand\vsigma{\mathbold{\sigma}}
\newcommand\vtheta{\mathbold{\theta}}

\newcommand\mA{\mathbf{A}}

\newcommand\mD{\mathbf{D}}

\newcommand\mM{\mathbf{M}}

\newcommand\mT{\mathbf{T}}
\newcommand\mW{\mathbf{W}}
\newcommand\mU{\mathbf{U}}

\newcommand\sd{\mathcal{D}}
\newcommand\se{\mathcal{E}}

\newcommand\si{\mathcal{I}}

\newcommand\sx{\mathcal{X}}
\newcommand\sy{\mathcal{Y}}

\newcommand\gr{\mathcal{G}} % Used for graphs
\newcommand\pa{\text{pa}}
\newcommand\R{\mathbb{R}}

\newcommand\id{\text{id}}

\newcommand\deff{\triangleq}
\newcommand\score{\mathcal{S}}

\newcommand\tdo{\text{do}}
\newcommand\gauss{\mathcal{N}}

\newcommand\expect{\mathbb{E}}

\title{\textbf{RECLAIM: Cyclic Causal Discovery Amid Measurement Noise}}
\author{Muralikrishnna G. Sethuraman \hspace{0.5cm} Faramarz Fekri \\ \small School of Electrical and Computer Engineering, Georgia Institute of Technology}
\date{}

\begin{document}

\maketitle

% \blfootnote{Preprint Under Review}

\begin{abstract}
Uncovering causal relationships is a fundamental problem across science and engineering. However, most existing causal discovery methods assume acyclicity and direct access to the system variables---assumptions that fail to hold in many real-world settings. For instance, in genomics, cyclic regulatory networks are common, and measurements are often corrupted by instrumental noise. To address these challenges, we propose RECLAIM, a causal discovery framework that natively handles both cycles and measurement noise. RECLAIM learns the causal graph structure by maximizing the likelihood of the observed measurements via expectation-maximization (EM), using residual normalizing flows for tractable likelihood computation. We consider two measurement models: (i) Gaussian additive noise, and (ii) a linear measurement system with additive Gaussian noise. We provide theoretical consistency guarantees for both the settings. Experiments on synthetic data and real-world protein signaling datasets demonstrate the efficacy of the proposed method.
\end{abstract}

\section{Introduction}
\label{sec:introduction}

% Why causal discovery and what is it?
Understanding cause-effect relationships among variables is a central problem in science and 
engineering~\citep{sachs_causal_2005,friedman2000using}. Causal models provide a mechanistic 
view of a system, enabling prediction of its behavior under unseen interventions. Causal 
relations are typically encoded as a directed graph (DG), where edges represent causal 
dependencies, and learning them is naturally posed as a graph structure learning problem.

% Brief overview of the existing causal discovery methods
Causal discovery methods broadly fall into three categories: (i) \emph{constraint-based}, 
(ii) \emph{score-based}, and (iii) \emph{hybrid}. Constraint-based methods 
\citep{spirtes2000causation,triantafillou2015constraint} identify causal graphs consistent 
with the conditional independence constraints observed in data, but scale poorly as the 
number of required independence tests grows exponentially. Score-based methods 
\citep{hauser2012characterization} instead optimize a scoring function 
such as the Bayesian Information Criterion (BIC) over the space of graphs; greedy search 
is typically employed since the search space grows super-exponentially with the number of 
nodes. Hybrid methods~\citep{tsamardinos2006max,solus2021consistency} combine elements of 
both approaches.

% Drawback with existing methods
With few exceptions, most causal discovery methods rely on two key assumptions: (i) the 
underlying causal graph is acyclic (a directed acyclic graph, or DAG), and (ii) the system 
variables are directly observed. While these assumptions simplify the learning problem, they 
are often violated in practice. Variables of interest are frequently \emph{latent}---for 
instance, a person's beliefs cannot be measured directly, but surveys provide noisy proxies 
of the underlying state. Furthermore, feedback loops are prevalent in biological 
systems~\citep{freimer2022systematic}, and such systems often exhibit structured measurement 
noise such as ``dropout.'' Imposing these assumptions in such settings can lead to 
misleading causal conclusions, limiting the practical applicability of existing methods.

% Our proposed method
To address these challenges, we propose \textsc{Reclaim}, a novel causal discovery framework 
that learns latent causal structure while jointly modeling directed cycles and measurement 
noise. We consider two measurement noise models: (i) Gaussian additive noise, and (ii) a 
linear measurement system with Gaussian noise. Using interventional data, we first estimate 
the noise distribution parameters, then employ an expectation-maximization (EM) procedure 
to learn the causal graph via likelihood maximization.

\subsection{Related Works}

% Cyclic causal discovery
\paragraph{Cyclic causal discovery.} The prior work on cyclic causal discovery spans constraint-based \citep{richardson1996discovery}, ICA-based \citep{lacerda2012discovering}, and score-based \citep{huetter2020estimation,amendola2020structure} approaches. Several methods also leverage interventional data for structure recovery \citep{hyttinen2012learning,huetter2020estimation}. Most relevant to our work, \citet{sethuraman2023nodags} proposed a differentiable framework for nonlinear cyclic graphs that models data likelihood directly, bypassing acyclicity constraints. \citet{sethuraman2025differentiable} later extended this to account for latent confounders. 

% Causal discovery under measurement noise
\paragraph{Causal discovery from indirect measurements.}
Under indirect measurements, causal discovery broadly splits into two paradigms.
\emph{Causal representation learning} (CRL) aims to jointly recover latent variables and their causal structure from high-dimensional observations under unknown transformations. Identifiability has been studied under parametric assumptions such as linear models \citep{squires2023linear,buchholz2023learning} and polynomial transformations \citep{ahuja2024multi}, as well as in fully nonparametric settings \citep{von2023nonparametric}. \emph{Causal discovery under measurement noise}, by contrast, assumes a known structural relationship between latents and observations, and focuses solely on recovering the causal graph over the latent variables. \citet{harris2013pc,yoon2020sparse} established consistency of the PC algorithm under Gaussian measurement error, and \citet{saeed2020anchored} extended this to a general constraint-based framework accommodating additive noise and dropouts. In the cyclic regime, \citet{sethuraman2024missnodag} address dropout noise by treating zeroed-out variables as missing data.

% Summary of our contributions
\subsection{Contributions}

In this work, we address two central limitations in causal discovery: (i) inability to handle measurement error, and (ii) restriction to acyclic graphs. Our main contributions are:

\begin{itemize}
    \item We introduce \textsc{Reclaim}, a differentiable causal discovery framework that natively handles nonlinear cyclic relationships in the presence of additive Gaussian measurement error.
    \item We consider two measurement noise models: (i) Gaussian additive noise and (ii) linear measurement systems, and provide a consistent estimator for the noise variance when appropriate interventions are available.
    \item We establish that exact maximization of the proposed score function under appropriate interventions identifies the interventional Markov equivalence class of the ground-truth graph.
    \item We conduct extensive experiments comparing \textsc{Reclaim} against the state-of-the-art causal discovery methods on both synthetic and real-world datasets, and show that increasing the number of measurements results in improved recovery performance.
\end{itemize}

\subsubsection{Organization.}
The remainder of the paper is organized as follows.
\Cref{sec:problem-setup} introduces the problem setup and the measurement models under consideration.
\Cref{sec:reclaim} presents the technical details of \textsc{Reclaim}.
\Cref{sec:experiments} evaluates its effectiveness on synthetic and real-world benchmarks.
\Cref{sec:discussions} concludes the paper.

\section{Problem Setup}
\label{sec:problem-setup}

\subsection{Structural Causal Model}
\label{ssec:sem}

% Preliminaries - graph, SCM, exogenous variables
Let $\gr = (\sx, \se)$ denote a directed graph with vertex set $\sx = \{X_1, \ldots, X_d\}$ and edge set $\se \subseteq \sx \times \sx$. Let $\rvx = (X_1, \ldots, X_d)$ be the associated random (endogenous) variable. The edge $(X_i, X_j) \in \se$ denotes a directed edge originated from the node $X_i$ to the node $X_j$, with slight abuse of notation we also denote this edge as $X_i \to X_j$. Following \cite{bollen1989structural,pearl2009causality}, we utilize the framework of structural causal models (SCM) to represent the functional dependencies in the causal graph. For each $i \in \sx$, we have the following \emph{structural equation}:
\begin{equation}
    \label{eq:sem-individual}
    X_i = f_i(\pa_\gr(X_i)) + Z_i,
\end{equation}
where $f_i$, called the \emph{causal mechanism}, encodes the functional dependency between the parents and the children. $\pa_\gr(X_i) \deff \{X_j\in \sx \mid X_j \to X_i \in \se\}$ denotes the \emph{parent set} of a node $X_i \in \sx$. 
% Exogenous noise and causal sufficiency
The set of variables $\rvz = (Z_1, \ldots, Z_d)$ represent the \emph{exogenous noise} within the system. We assume that the exogenous noise variables are independent of each other, i.e., $Z_i \perp Z_j$ for $i, j \in [d]$ and $i \neq j$. This assumption, also known as \emph{causal sufficiency}, excludes hidden confounding and selection bias. Let $f = (f_1, \ldots, f_d)$ be the combined causal mechanism, combining \cref{eq:sem-individual} for all $i \in [d]$, we obtain the following: 
\begin{equation}
\label{eq:sem-combined}
    \rvx = f(\rvx) + \rvz.
\end{equation}

% Solution is not a gurantee and forward map diffeomorphism
However, it is not guaranteed that \cref{eq:sem-combined} has a solution, primarily due to the (potential) presence of cycles in $\gr$. As a result, restrictions are necessary on the causal mechanism to ensure the system attains equilibrium (see \cref{sec:solution-to-sem} for a detailed discussion). To that end, building on~\cite{sethuraman2023nodags}, we make the following assumption on the map $(\id - f): \rvx \mapsto \rvz$, which we call the \emph{forward map}.

\begin{assume}\label{assume:forward-map-diffeo}
    The forward map $(\id - f): \rvx \mapsto \rvz$ is a \emph{diffeomorphism}\footnote{A function $f$ is a diffeomorphism if $f^{-1}$ exists, and both $f$ and $f^{-1}$ are differentiable.}.
\end{assume}
  
% Probability of the observations
Finally, the SCM provides us with a probability density for the exogenous noise variables, which we denote as $p_Z$. Given $p_Z$, we can obtain the probability density of the endogenous variables using the change of variable property of probability densities. That is,
\begin{equation}
    p_\gr(\rvx) = p_Z\big((\id - f)(\rvx)\big)\big|J_{(\id - f)}(\rvz)\big|,
\end{equation}
where $J_{(\id - f)}(\rvx)$ is the Jacobian matrix of the forward map evaluated at $\rvx$.

\paragraph{Interventions. }

We consider \emph{surgical interventions}~\citep{pearl2009causality} within our framework. Under surgical interventions, when a subset of nodes $\sx_I \subseteq \sx$ are (surgically) intervened, we obtain a mutilated graph, denoted as $\tdo(I)(\gr)$, where all the incoming edges to the intervened nodes are removed  (see \cref{fig:measurement-process-graph}). We identify an intervention by the index set $I \subseteq [d]$ of the nodes intervened in $\sx$. Given a set of intervened nodes $\sx_I \subseteq \sx$, let $\mU \in \{0,1\}^{d\times d}$ be a diagonal masking matrix with $U_{kk} = 1$ if $X_k \in \sx_I$, and $0$ otherwise. The structural equations in \cref{eq:sem-combined} are then modified as follows: 
\begin{equation}\label{eq:sem-intervention}
    \rvx = \mU f(\rvx) + \mU \rvz + \rvc,
\end{equation}
where $\rvc \in \R^d$ denotes the values of the intervened nodes. Consequently, under the intervention $I$, the probability density of the observations is given by
\begin{equation}\label{eq:prob-density-interventions}
    p_{\tdo(I)(\gr)}(\rvx) = p_C(\rvx_I)p_Z\big(\rvz_{[d]\setminus I}\big)\big|J_{(\id - \mU f)}(\rvz)\big|,
\end{equation}
where $p_C$ denotes the interventional density, and $\rvz_{[d] \setminus I} = \big[(\id - \mU f)(\rvx)\big]_{[d]\setminus I}$. In our experiments, we assume that the intervened nodes are sampled from a distribution with known variance $\sigma_\si^2$.

\subsection{Measurement System}
\label{sec:measurement-sys}

In practice, the causal variables $\sx$ are latent and only indirectly observed. They often pass through a measurement system, yielding \emph{measured} variables $\sy = \{Y_1, \ldots, Y_p\}$ (with $\rvy$ being the corresponding random variable). Each $Y_j$ represents a coarsened measurement of $\rvx$ given by, 
\begin{equation}\label{eq:measurements}
    Y_j = g_j(\rvx, \ve_j), \quad j = 1, \ldots, p,
\end{equation}
where $g = (g_1, \ldots, g_p)$ denotes the \emph{coarsening mechanism} (also referred as the \emph{measurement channel}), and $\rve = (\varepsilon_1, \ldots, \varepsilon_d)$ represents any external factor that can affect the measurement system. We assume that $p \geq d$ (otherwise, identifiability is ill-posed), and $\varepsilon_i \perp \varepsilon_j$, for $i, j \in [p]$ and $i\neq j$. This results in a combined measurement graph, denoted as $\gr_m$, whose vertex set contains both the latent variables $\sx$ and the measured variables $\sy$, i.e., $\sx \cup \sy$, see \cref{fig:measurement-process-graph}. 

In our work, we deal with two different types of measurement system: 
\begin{enumerate}[itemsep=1em]
    \item \textbf{Gaussian additive noise}: In this case, $p = d$, $\ve_i \sim \gauss(0, \sigma_i^2)$, and 
    \begin{equation}\label{eq:gan-measurements}
    Y_i = X_i + \ve_i, \quad i = 1, \ldots, d.
    \end{equation}
    Given the latent variables $\rvx$, the density of the measurements is also Gaussian, i.e., $\rvy \mid \rvx = \vx \sim \gauss(\vx, \mD_\ve)$, where $\mD_\ve = \text{Diag}(\sigma_1^2, \ldots, \sigma_d^2)$. We refer to $p_{Y|X}(\rvy\mid \rvx)$ as the \emph{measurement channel density}. 
    
    \item \textbf{Linear measurement system with Gaussian noise}: Similar to the previous setting, we have $\ve_j \sim \gauss(\vzero, \sigma_j^2)$. However, in this case, $p$ can be larger than $d$, i.e., $p \geq d$, and 
    \begin{equation}\label{eq:linear-measurements}
    Y_j = \sum_{i=1}^d A_{ji}X_i + \ve_j.     
    \end{equation}
    The matrix $\mA = [A_{ji}]$, with $i \in [d]$ and $j \in [p]$, is referred to as the \emph{measurement matrix}, and is assumed to be known and full (column) rank. Given the latent variables $\rvx$, the conditional density of $\rvy$ is again Gaussian, i.e., $\rvy \mid \rvx = \vx \sim \gauss(\mA\vx, \mD_\ve)$, where $\mD_\ve = \text{Diag}(\sigma_1^2, \ldots, \sigma_p^2)$.
\end{enumerate}

\noindent
\textbf{Goal.}  Given a set of interventions $\si = \{I_k\}_{k=1}^K$, \emph{our goal is to learn structure of $\gr$ by maximizing the log-likelihood of the coarsened measurements}. We present our approach towards addressing this problem in the next section.  

\begin{figure}[t]
    \centering
    \scalebox{0.9}{
\begin{tikzpicture}
% \sansmath 
% \sffamily
\node (x1) [circle, draw=black!60, fill=black!10] at (0,0) {$X_1$};
\node (x2) [circle, draw=black!60, fill=black!10] at (1.5,-0.7) {$X_2$};
\node (x3) [circle, draw=black!60, fill=black!10] at (3,0) {$X_3$};

\node (y1) [circle, draw=black] at (-1,-2.1) {$Y_1$};
\node (y2) [circle, draw=black] at (0,-2.1) {$Y_2$};
\node (y3) [circle, draw=black] at (1.5,-2.1) {$Y_3$};
\node (y4) [circle, draw=black] at (3,-2.1) {$Y_4$};
\node (y5) [circle, draw=black] at (4,-2.1) {$Y_5$};

\node (a) at (1.5, -3) {(a)}; 

\draw [myarrow] (x1) -- (x2);  
\draw [myarrow] (x2) -- (x3);  
\draw [myarrow] (x3) -- (x1);

\draw [myarrow, blue] (x1) -- (y1);
\draw [myarrow, blue] (x1) -- (y2);
\draw [myarrow, blue] (x2) -- (y2);
\draw [myarrow, blue] (x2) -- (y3);
\draw [myarrow, blue] (x2) -- (y4);
\draw [myarrow, blue] (x3) -- (y3);
\draw [myarrow, blue] (x3) -- (y5); 

\begin{scope}[xshift=8cm]
\node (xi1) [circle, draw=black!60, fill=black!10] at (0,0) {$X_1$};
\node (xi2) [circle, draw=red!60, fill=red!10] at (1.5,-0.7) {$X_2$};
\node (xi3) [circle, draw=black!60, fill=black!10] at (3,0) {$X_3$};

\node (yi1) [circle, draw=black] at (-1,-2.1) {$Y_1$};
\node (yi2) [circle, draw=black] at (0,-2.1) {$Y_2$};
\node (yi3) [circle, draw=black] at (1.5,-2.1) {$Y_3$};
\node (yi4) [circle, draw=black] at (3,-2.1) {$Y_4$};
\node (yi5) [circle, draw=black] at (4,-2.1) {$Y_5$};

\node (b) at (1.5, -3) {(b)}; 
\end{scope}

\draw [myarrow] (xi2) -- (xi3);  
\draw [myarrow] (xi3) -- (xi1);

\draw [myarrow, blue] (xi1) -- (yi1);
\draw [myarrow, blue] (xi1) -- (yi2);
\draw [myarrow, blue] (xi2) -- (yi2);
\draw [myarrow, blue] (xi2) -- (yi3);
\draw [myarrow, blue] (xi2) -- (yi4);
\draw [myarrow, blue] (xi3) -- (yi3);
\draw [myarrow, blue] (xi3) -- (yi5); 
\end{tikzpicture}}
    \vspace{-0.3cm}
    \caption{(a) Illustration of a causal graph $\gr_m$ encoding both the system variables $\sx$ and the measured variables $\sy$ for a linear measurement system. (b) Mutilated graph, $\tdo(I)(\gr_m)$, resulting from a surgical intervention on $X_2$. }
    \label{fig:measurement-process-graph}
\end{figure}
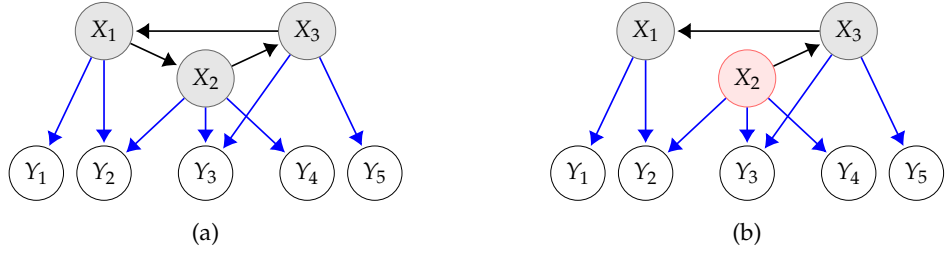

\section{RECLAIM Causal Discovery Framework}
\label{sec:reclaim}

Let $\vtheta, \vphi$ represent the parameters of the latent variable density $p_\gr(\rvx)$ and the conditional measurement density $p_{Y|X}(\rvy\mid\rvx)$, respectively. For an interventional experiment $I \subseteq [d]$, we aim to learn the causal graph structure $\gr$ by maximizing the likelihood of the observed variables. However, this maximization is generally intractable, since
$$
p(\rvy\mid \vtheta, \vphi, I) = \int p_{\tdo(I)(\gr)}(\rvx\mid \vtheta)\, p_{Y|X}(\rvy\mid \rvx, \vphi)\, d\rvx
$$
does not admit a closed-form solution in general.

To address this intractability, RECLAIM adopts an EM-based approach that maximizes a tractable lower bound on $\log p(\rvy \mid \vtheta, \vphi)$. 
We now describe the framework in detail, beginning with defining the score function to be maximized, followed by an overview of the EM-based optimization procedure. We then detail the technical components of the expectation step: estimating the measurement process parameters, computing the latent likelihood, and sampling latents conditioned on the observations. Finally, we establish the theoretical guarantees of RECLAIM, showing that exact maximization of the score function recovers the true causal structure up to Markov equivalence.

\subsection{Score Function}
\label{sec:reclaim-score}

Given a family of interventions $\si = \{I_k\}_{k=1}^K$ and a causal graph $\gr$, we define the \emph{score function} as the regularized log-likelihood of the observed variables under each interventional regime. Following~\cite{sethuraman2025differentiable}, this takes the form:
\begin{equation}\label{eq:score-large-sample}
    \score_\si(\gr) \triangleq \sup_{\vtheta, \vphi} \sum_{k=1}^K \expect_{\rvy \sim p^{(k)}} 
    \log p(\rvy \mid \vtheta, \vphi, I_k) - \lambda|\gr|,
\end{equation}
where $p^{(k)}$ denotes the data-generating density under the $k$-th interventional experiment $I_k$, and $\lambda|\gr|$ is an $\ell_0$-type sparsity penalty on the edge set of $\gr$.

In practice, we assume access to a collection of finite samples per interventional experiments, $\sd_\si = \{\{\vy^{(k, \ell)}\}_{\ell=1}^{n_k}\}_{k=1}^K$. Thus, the expectation in \cref{eq:score-large-sample} is replaced with sample mean. Finally, we maximize the score function over the space of graphs $\gr$ resulting in the following score function: 
\begin{equation}\label{eq:score-finite-sample}
    \tilde\score_\si(\sd_\si) \triangleq \sup_{\gr, \vtheta, \vphi} \sum_{k=1}^K \sum_{\ell=1}^{n_k} 
    \log p(\vy^{(k, \ell)} \mid \vtheta, \vphi, I_k) - \lambda|\gr|.
\end{equation}

\subsection{Optimizing The Score Via Penalized Expectation-Maximization}

As discussed earlier, the intractability of \cref{eq:score-finite-sample} stems from integrating out the latent variables, precluding a closed-form solution. To circumvent this, we adopt the penalized EM framework of \cite{chen2014penalized}. Beginning from an initial guess $\vtheta^0$ for the latent SCM parameters (encompassing both the neural network weights and the graph adjacency), the algorithm alternates between the two steps below until a convergence criterion is met:

\begin{description}[itemsep=1.5em, leftmargin=0em]
    \item[E-step:] Given the current parameter estimate $\vtheta^t$ and the observations, form the surrogate objective $\mathcal{Q}(\vtheta\mid\vtheta^t)$ as the expected complete-data log-likelihood:
    \begin{equation}\label{eq:e-step}
        \mathcal{Q}(\vtheta \mid \vtheta^t) = \sum_{k=1}^K\sum_{\ell=1}^{n_k} \expect_{\rvx \sim p(\cdot \mid \vy^{(k, \ell)}, \vtheta^t, \hat\vphi, I_k)} \Big[\log p\big(\rvx, \vy^{(k, \ell)}\mid \vtheta, \hat\vphi, I_k\big) \Big].
    \end{equation}
    where $\hat\vphi$ denotes the estimated parameters of the measurement channel (see \cref{sec:measurement-system-estimate} for details), and
    \begin{equation}\label{eq:factorization}
        \log p\big(\rvx, \vy^{(k, \ell)}\mid \vtheta, \hat\vphi, I_k\big) = \log p_{\tdo(I_k)(\gr)}(\rvx\mid \vtheta) + \log p\big(\vy^{(k, \ell)}\mid \rvx, \hat\vphi\big).
    \end{equation}

    \item[M-step:] Update the parameters by solving the penalized maximization of the surrogate:
    \begin{equation}\label{eq:m-step}
        \vtheta^{t+1} = \arg\max_{\vtheta}\; \mathcal{Q}(\vtheta \mid \vtheta^t) - \lambda\,\mathcal{R}(\vtheta),
    \end{equation}
    where $\mathcal{R}(\vtheta)$ is a sparsity-promoting regularizer on the graph.
\end{description}

The maximization in the M-step is carried out with stochastic gradient ascent. A key observation is that the surrogate $\mathcal{Q}$ serves as a lower bound (up to a constant) on the marginal log-likelihood:
\begin{equation}\label{eq:elbo}
    \sum_{k=1}^K \sum_{\ell=1}^{n_k}
    \log p\!\left(\vy^{(k, \ell)} \mid \vtheta, \hat\vphi, I_k\right)
    \;\geq\; \mathcal{Q}(\vtheta \mid \vtheta^t) - \text{const}.
\end{equation}
Consequently, each M-step improves a valid lower bound on the log-likelihood of the observed-data rather than optimizing it directly. A formal derivation of \cref{eq:elbo} and a convergence analysis are deferred to \cref{app:convergence}.

\subsection{Estimation of Measurement System Parameters}
\label{sec:measurement-system-estimate}

Computing the surrogate objective $\mathcal{Q}$ in \cref{eq:e-step} requires evaluating the 
measurement channel density, whose parameters are a priori unknown and must therefore be 
estimated from data. In the two measurment processes under consideration (Gaussian additive noise and linear measurement system), the unknown parameters correspond to the variances of the additive noise, 
which fully identify the measurement channel. 
% For dropout noise, the dropout probabilities $\vpi = (\pi_1, \ldots, \pi_p)$ constitute an additional set of unknowns. 
Since the latent 
variables $\rvx$ are not directly observed, estimating these parameters requires additional 
structure on the data-generating mechanism. We formalize this as the following condition on 
the intervention family $\si$.

\begin{condition}[Measurement Channel Identifiability]\label{cond:channel-indentifiability-interventions}
An intervention family $\si = \{I_k\}_{k=1}^K$ is \emph{measurement channel identifiable} 
if, for each node $X_i \in \sx$, there exists an intervention $I_k \in \si$ such that 
$X_i \in \sx_{I_k}$.
\end{condition}

Intuitively, \cref{cond:channel-indentifiability-interventions} requires that every latent 
variable is targeted by at least one intervention, ensuring that its variance is known under 
that interventional regime and can therefore be used to identify the measurement noise. We 
now show that this condition is sufficient to identify the measurement parameters for each 
of the three noise models under consideration.

\subsubsection{Gaussian Additive Noise.}

Since $\rvx$ and $\rve$ are independent, the marginal variance of $Y_i$ decomposes as 
$\sigma_{Y_i}^2 = \sigma_{X_i}^2 + \sigma_i^2$, where $\sigma_{Y_i}^2$ can be estimated 
directly from observations. While $\sigma_{X_i}^2$ is generally unknown under observational 
data, \cref{cond:channel-indentifiability-interventions} guarantees the existence of an 
intervention $I_k \in \si$ under which $X_i$ is intervened upon, fixing its variance to the 
known interventional variance $\sigma_\si^2$. The measurement noise variance is then 
recovered as $\sigma_i^2 = \sigma_{Y_i}^2 - \sigma_\si^2$.

\subsubsection{Linear Measurement System with Gaussian Noise.}

In this setting, each observed variable $Y_j$ is a linear mixture of all latent variables 
(cf.\ \cref{eq:linear-measurements}), making direct variance decomposition infeasible. 
Notably, \cref{cond:channel-indentifiability-interventions} does not require a single 
intervention targeting all nodes simultaneously. Instead, we exploit the structure of the 
measurement matrix $\mA$ via the following proposition.

\begin{prop}\label{prop:projection-existence}
    Let $\mA \in \R^{p\times d}$ be the measurement matrix with $\rank(\mA) = d$, let 
    $\veca_i$ denote the $i$-th column of $\mA$, and let $\mA_{-i}$ denote the measurement 
    matrix excluding the $i$-th column. Then, there exists a vector $\vt \in \R^p$ such 
    that $\mA_{-i}^\top \vt = \bm{0}$ and $\veca_i^\top \vt \neq 0$. 
\end{prop}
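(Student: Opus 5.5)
We prove the proposition by a linear-algebra argument that uses only the full column rank of $\mA$. The key observation is that $\veca_i$ cannot lie in the column space of $\mA_{-i}$. If it did, the columns of $\mA$ would be linearly dependent and $\rank(\mA) < d$, contradicting the hypothesis. So $\operatorname{col}(\mA_{-i})$ is a proper subspace of $\operatorname{col}(\mA)$ that does not contain $\veca_i$.

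We then take $\vt$ from the orthogonal complement of $\operatorname{col}(\mA_{-i})$ in $\R^p$, which equals $\ker(\mA_{-i}^\top)$. Concretely, let $\mathbf{P}$ be the orthogonal projector onto $\operatorname{col}(\mA_{-i})$; since $\mA_{-i}$ has full column rank $d-1$, it is
\[
\mathbf{P} = \mA_{-i}(\mA_{-i}^\top \mA_{-i})^{-1}\mA_{-i}^\top .
\]
Set $\vt \deff (\mathbf{I} - \mathbf{P})\veca_i$, the residual of $\veca_i$ after projection. Note that $\mA_{-i}^\top\mA_{-i}$ is invertible because the $d-1$ columns of $\mA_{-i}$ are a subset of linearly independent columns. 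When $d=1$, $\mA_{-i}$ is empty, the first condition is vacuous, and we simply take $\vt=\veca_i\neq\vzero$.

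Next we check the two required properties.
\begin{itemize}
\item Because $\mA_{-i}^\top \mathbf{P} = \mA_{-i}^\top$, we get $\mA_{-i}^\top\vt = \mA_{-i}^\top\veca_i - \mA_{-i}^\top\veca_i = \vzero$.
\item Because $\mathbf{I}-\mathbf{P}$ is symmetric and idempotent,
\[
\veca_i^\top\vt = \veca_i^\top(\mathbf{I}-\mathbf{P})^\top(\mathbf{I}-\mathbf{P})\veca_i = \|(\mathbf{I}-\mathbf{P})\veca_i\|^2 .
\]
This quantity is zero only if $\veca_i = \mathbf{P}\veca_i \in \operatorname{col}(\mA_{-i})$, which the first observation rules out. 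Hence $\veca_i^\top\vt>0$.
\end{itemize}

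The only step that needs care is the non-degeneracy $\veca_i\notin\operatorname{col}(\mA_{-i})$, and it follows immediately from the rank hypothesis.

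As an alternative that avoids projectors, we can use the left inverse $\mA^{+} = (\mA^\top\mA)^{-1}\mA^\top$, which satisfies $\mA^+\mA = \mathbf{I}_d$. Take $\vt$ to be the $i$-th row of $\mA^+$, transposed. Then $\vt^\top\veca_j = \delta_{ij}$ for every column $j$, so $\mA_{-i}^\top\vt=\vzero$ and $\veca_i^\top\vt=1$. We would present this version, since it also makes the construction explicit for later use in estimating the noise variances.
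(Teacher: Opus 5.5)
Your proof is correct. As far as can be judged (the paper's appendix proof is not reproduced here), it rests on the same fact the paper's argument needs: full column rank forces $\veca_i \notin \operatorname{col}(\mA_{-i}) = \mathcal{N}(\mA_{-i}^\top)^{\perp}$, so $\mathcal{N}(\mA_{-i}^\top)$ must contain a vector with nonzero inner product with $\veca_i$. Your residual $(\mathbf{I}-\mathbf{P})\veca_i$ exhibits that vector explicitly, and after rescaling by $\|(\mathbf{I}-\mathbf{P})\veca_i\|^{-2}$ it coincides with the (transposed) $i$-th row of $(\mA^\top\mA)^{-1}\mA^\top$ from your alternative construction.
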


We defer the proof to \cref{app:prop-projection-existence-proof}. The key idea is to project the 
observations onto a direction $\vt_i$ that isolates the contribution of $X_i$. Specifically, 
choosing $\vt_i$ such that $\mA_{-i}^\top \vt_i = \bm{0}$ and $\veca_i^\top \vt_i \neq 0$, 
and letting $\zeta_i = \vt_i^\top \rvy$, we obtain:
\begin{align*}
    \zeta_i = \vt_i^\top \rvy = \vt_i^\top \mA\rvx + \vt_i^\top \rve 
    = (\vt_i^\top \veca_i) X_i + \vt_i^\top \rve.
\end{align*}
Since $\rvx$ and $\rve$ are independent, the variance of $\zeta_i$ decomposes as:
\begin{equation}
    \label{eq:projection-sigma}
    \sigma_{\zeta_i}^2 = (\vt_i^\top \veca_i)^2 \sigma_{X_i}^2 + \vt_i^\top \mD_\ve \vt_i.
\end{equation}
\Cref{cond:channel-indentifiability-interventions} guarantees the existence of an intervnetion $I_k$ containing $X_i$, and $\sigma_{X_i}^2 = \sigma_\si^2$. Letting $\vt_i^\odot \triangleq \vt_i \odot \vt_i$, $b_i = \sigma_{\zeta_i}^2 - 
(\vt_i^\top \veca_i)^2 \sigma_{\si}^2$, and $\vsigma^2 = (\sigma_1^2, \ldots, \sigma_p^2)$, 
\cref{eq:projection-sigma} reduces to the linear equation $(\vt_i^\odot)^\top \vsigma^2 = b_i$.
Aggregating sufficiently many such projection vectors into a matrix $\mT_2 = [(\vt^{(k)})_k^\top]$, 
the noise variances are recovered by solving the convex least-squares problem:
\begin{equation}
    \label{eq:noise-parameter-estimation}
    \min_{\vsigma^2 \geq 0} \norm{\mT_2 \vsigma^2 - \vecb}_2^2.
\end{equation}
This approach requires $\mT_2$ to have rank $p$, which may not always be achievable. However, 
the following theorem shows that this is a generic property of the measurement matrix.

\begin{theorem}\label{theorem:full-rank-possible}
Let $\mA \in \R^{p\times d}$ with $p \geq d$ and $\rank(\mA) = d$. For each $i \in [d]$, 
let $S_i(\mA) \triangleq \mathcal{N}(\mA_{-i}^\top)$, and define the admissible projection set as
$$\mathcal{T}(\mA) \triangleq \bigcup_{i=1}^d \{\vt \in S_i(\mA): \veca_i^\top \vt \neq 0\}.$$
Let $\bar{\mathcal{A}}$ denote the set of full-rank matrices $\mA$ for which no choice of 
$\vt^{(1)}, \ldots, \vt^{(p)} \in \mathcal{T}(\mA)$ yields $\rank(\mT_2) = p$. Then 
$\bar{\mathcal{A}}$ has zero Lebesgue measure.
\end{theorem}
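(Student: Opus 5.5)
The plan is to reduce the statement to the non-vanishing of a single polynomial in the entries of $\mA$. A polynomial that is not identically zero vanishes only on a Lebesgue-null set. To show the relevant polynomial is nonzero, it suffices to exhibit one matrix at which it does not vanish.

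\emph{Step 1 (remove the open constraint and linearize).} The rows of $\mT_2$ are $\vt\odot\vt$ with $\vt\in\mathcal{T}(\mA)$. So $\rank(\mT_2)=p$ is achievable if and only if $\mathrm{span}\{\vt\odot\vt:\vt\in\mathcal{T}(\mA)\}=\R^p$.
\begin{itemize}
\item For fixed $i$, the set $\{\vt\in S_i(\mA):\veca_i^\top\vt\neq 0\}$ is nonempty by \cref{prop:projection-existence}. It is therefore $S_i(\mA)$ minus a proper hyperplane, hence dense in $S_i(\mA)$.
\item The map $\vt\mapsto\vt\odot\vt$ is continuous and finite-dimensional spans are closed. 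Hence the span over the admissible set equals the span over all of $S_i(\mA)$.
\item By polarization, $\vt\odot\vt$ for $\vt=\vt_1+\vt_2$ gives $2\,\vt_1\odot\vt_2$ modulo squares. So this span equals $\mathrm{span}\{\vt_1\odot\vt_2:\vt_1,\vt_2\in S_i(\mA)\}$, which is spanned by products of basis vectors of $S_i(\mA)$.
\end{itemize}

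\emph{Step 2 (polynomial bases).} I would build spanning sets of $S_i(\mA)=\mathcal{N}(\mA_{-i}^\top)$ whose entries are polynomial in $\mA$, using generalized cross products.
\begin{itemize}
\item For each subset $J\subseteq[p]$ with $|J|=p-d$, let $\vt_{i,J}(\mA)$ be the vector of signed $(p-1)\times(p-1)$ cofactors of the $p\times(p-1)$ matrix $[\mA_{-i},\ E_J]$, where $E_J$ collects the standard basis vectors indexed by $J$.
\item Expanding along a repeated column shows $\mA_{-i}^\top\vt_{i,J}=\bm 0$.
\item Whenever $\rank(\mA_{-i})=d-1$, which holds for full-rank $\mA$, the vectors $\{\vt_{i,J}\}_J$ span $S_i(\mA)$. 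Indeed, they are nonzero for every $J$ whose basis vectors complete the columns of $\mA_{-i}$ to a basis, and together they realize all of the $(p-d+1)$-dimensional null space. This is the step needing the most care; a fallback is to note that at least $\vt_{i,J}$ span generically, which suffices for measure zero.
\end{itemize}
Now form the matrix $\mM(\mA)$ whose rows are $\vt_{i,J}\odot\vt_{i,J'}$ over all $i,J,J'$. Its entries are polynomials in $\mA$. Failure, i.e.\ $\mA\in\bar{\mathcal{A}}$, implies all $p\times p$ minors of $\mM(\mA)$ vanish. Consequently $\bar{\mathcal{A}}$ is contained in the zero set of $P(\mA)\triangleq\sum(\text{minors})^2$.

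\emph{Step 3 (a witness).} It remains to show $P\not\equiv 0$ by evaluating at $\mA_0=\begin{bmatrix}\mathbf{I}_d\\ \mathbf{0}\end{bmatrix}$.
\begin{itemize}
\item Here $S_i(\mA_0)=\mathrm{span}(e_i,e_{d+1},\dots,e_p)$, and $\veca_i^\top e_i=1\neq0$.
\item Squares of vectors in $S_i(\mA_0)$ produce $e_i$ and $e_{d+1},\dots,e_p$.
\item Taking the union over $i\in[d]$ gives all of $e_1,\dots,e_p$. So $\rank\mM(\mA_0)=p$ and $P(\mA_0)\neq 0$.
\end{itemize}
The case $p=d$ is included, with $S_i(\mA_0)=\mathrm{span}(e_i)$. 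Hence $\bar{\mathcal{A}}$ lies in the zero set of a nonzero polynomial on $\R^{p\times d}$ and has Lebesgue measure zero.

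I expect the main obstacle to be Step 2: ensuring that the polynomial cofactor vectors really span $S_i(\mA)$, or at least that the resulting rank condition is equivalent to the original one at generic $\mA$. Step 1's density argument is what lets the constraint $\veca_i^\top\vt\neq0$ be ignored.
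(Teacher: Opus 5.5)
Your proof is correct and follows essentially the same strategy as the paper. You encode membership in $\bar{\mathcal{A}}$ as the vanishing of a polynomial in the entries of $\mA$, then certify that this polynomial is not identically zero by exhibiting a single matrix at which $\mT_2$ can be made full rank. (I could not see the appendix, so this comparison is inferred from the standard argument for such generic-rank statements rather than checked against the paper's text.)

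The step you flag as the main obstacle is not actually load-bearing, and you can drop the spanning claim everywhere except at $\mA_0$. The implication $\mA\in\bar{\mathcal{A}}\Rightarrow P(\mA)=0$ needs only $\vt_{i,J}(\mA)\in S_i(\mA)$ together with your Step~1. The spanning property is needed only at $\mA_0$, and there it can be read off directly: $\vt_{i,J}(\mA_0)\in\{\bm 0,\pm e_i,\pm e_{d+1},\dots,\pm e_p\}$, and every $\pm e_k$ with $k\in\{i,d+1,\dots,p\}$ occurs.
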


We defer the proof to \cref{app:theorem-full-rank-possible-proof}. The projection 
vectors $\vt$'s are sampled to construct $\mT_2$ using the procedure detailed in 
\cref{app-alg:projection-sampling} in \cref{app:projection-sampling}. 

\subsection{Computing Likelihood of Latent Variables}
\label{sec:latent-likelihood}

From \cref{eq:e-step,eq:factorization}, computing the surrogate loss function $\mathcal{Q}$ requires evaluating log likelihood of the latent variables, $\log p_{\tdo(I_k)(\gr)}(\rvx \mid \vtheta)$. We describe how this is computed by detailing the SCM parameterization and the associated log-likelihood.

\paragraph{Modeling the Causal Mechanism.} 

We model each causal mechanism $f_i$ in \cref{eq:sem-individual} using a contractive neural network, that is, a neural network with Lipschitz constant less than one, which is enforced during training by rescaling layer weights by their spectral norm~\citep{iresnet}. Contractivity guarantees, via the Banach fixed point theorem~\citep{banachfp}, that the map $(\id - \mU f)$ is invertible (ensuring that \cref{assume:forward-map-diffeo} is satisfied) and the log-determinant of its Jacobian is well-defined.

To prevent spurious self-loops and encourage sparsity, we introduce a dependency mask 
$\mM \in \{0,1\}^{d \times d}$ with zero diagonal, whose entries are treated as Bernoulli 
random variables sampled via the Gumbel-softmax distribution~\citep{jang2016categorical}, $\mM \sim p_M(\mM\mid \vtheta)$. 
The causal mechanism then takes the form:
\begin{equation}
    [\mathrm{F}_{\vtheta}(\rvx)]_i = [\mathrm{NN}_{\vtheta}(\mM_{\ast,i} \odot \rvx)]_i,
    \label{eq:sem-func-model}
\end{equation}
where $\mM_{\ast,i}$ denotes the $i$-th column of $\mM$. and the sparsity penalty is given by 
$\mathcal{R}(\vtheta) = \mathbb{E}_{\mM\sim p_M(\cdot\mid \vtheta)}\|\mM\|_1$.

\paragraph{Log-Determinant of the Jacobian.} 

Given the contractive neural network parameterization in \cref{eq:sem-func-model}, the log-likelihood in \cref{eq:e-step} requires evaluating the log-determinant of the Jacobian of $(\id - \mU f)$. Na\"ive computation of the log-determinant of the Jacobian function has a computation cost that is $\mathcal{O}(d^3)$. However, since the causal mechanism $f$ is contractive, the log-determinant admits the convergent power series expansion~\citep{iresnet}:
\begin{equation}
    \log \big|\det J_{(\id - \mU f)}(\rvx)\big| = 
    -\sum_{m=1}^\infty \frac{1}{m} \mathrm{Tr}\Big\{J^m_{\mU f}(\rvx)\Big\}.
    \label{eq:power-series}
\end{equation}
The terms in the power series only depend on the trace of the powers of the Jacobian matrix, and hence reduces the computation cost to $\mathcal{O}(d^2)$. However, Truncating this series introduces bias, which we correct using the Russian roulette estimator~\citep{russianroulette}, and reduce the per-iteration cost using the Hutchinson trace estimator~\citep{hutchtraceestimator}. This results in the final unbiased estimator given by \cref{app-eq:unbiased-log-det} in \cref{app:log-det-jacobian}. We defer the details to \cref{app:log-det-jacobian}.

\subsection{Sampling Latents Given the Observations}

With the measurement parameters $\hat\vphi$ estimated and the latent likelihood in hand, the remaining challenge in evaluating $\mathcal{Q}$ is computing the posterior expectation over $\rvx$. Since the nonlinear causal mechanisms preclude a closed-form posterior, we employ \emph{Sampling Importance Resampling} (SIR)~\citep{smith1992bayesian} to draw approximate posterior samples.

We draw $S$ proposal samples $\{\vx^{(s)}\}_{s=1}^S$ from a Gaussian proposal 
$q(\rvx \mid \vy) = \mathcal{N}(\vmu_y, \mD_\ve)$, where the proposal mean $\vmu_y$ 
depends on the measurement model:
\begin{equation}
    \vmu_y = \begin{cases} \vy & \text{additive Gaussian and dropout,} \\ 
    (\mA^\top \mA)^{-1} \mA^\top \vy & \text{linear measurement system.} \end{cases}
\end{equation}
Each sample is then assigned an importance weight proportional to:
\begin{equation}
    w^{(s)} \propto \frac{p(\vx^{(s)} \mid \vtheta)\, p(\vy \mid \vx^{(s)}, \vtheta)}
    {q(\vx^{(s)} \mid \vy)},
    \label{eq:sir-weights}
\end{equation}
and a final set of samples is obtained by resampling from $\{\rvx^{(s)}\}_{s=1}^S$ 
according to the normalized weights $\{\tilde{w}^{(s)}\}_{s=1}^S$.

\subsection{Consistency Under Exact Maximization}

Having established the practical components of RECLAIM, we now turn to its theoretical justification. The following theorem shows that exact maximization of the score function in \cref{eq:score-large-sample} recovers the true causal structure up to $\si$-Markov equivalence. Specifically, the graph $\hat\gr$ obtained by maximizing \cref{eq:score-large-sample} lies in the same general Markov equivalence class as the ground-truth graph $\gr^\ast$~\citep{bongers2021foundations} across all interventional settings $I_k \in \si$.

\begin{theorem}\label{theorem:consistency}
Let $\si = \{I_k\}_{k=1}^K$ be a family of interventional targets satifying \cref{cond:channel-indentifiability-interventions}, let $\gr^\ast$ denote the ground truth directed graph, $p^{(k)}$ denote the data generating distribution for $I_k \in \si$, and $\hat{\gr} := \arg\max_{\gr} \score(\gr)$. Then, under \cref{app-assume:sufficient-capacity,app-assume:faithfulness,app-assume:positivity,app-assume:finite-entropy,app-assume:parameter-compactness}, and for a suitably chosen $\lambda > 0$, we have that $\hat{\gr} \equiv_\si \gr^\ast$. That is, $\hat{\gr}$ is $\si$-Markov equivalent to $\gr^\ast$.
\end{theorem}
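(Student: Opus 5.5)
The proof reduces the measured-data problem to the latent-data consistency argument of \citet{sethuraman2025differentiable}. The reduction rests on identifiability of the measurement channel and then on deconvolution.

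\textbf{Step 1: the channel is fixed.} By \cref{cond:channel-indentifiability-interventions}, the noise variances are identified from the interventional data. For additive noise this is the variance decomposition of \cref{sec:measurement-system-estimate}. For the linear system it is \cref{theorem:full-rank-possible} together with the least-squares problem \cref{eq:noise-parameter-estimation}. So in the population limit $\hat\vphi=\vphi^\ast$, and each $p(\rvy\mid\vtheta,\vphi^\ast,I_k)$ is the latent density $p_{\tdo(I_k)(\gr)}(\cdot\mid\vtheta)$ pushed through the known channel. For additive noise the channel is convolution with $\gauss(0,\mD_\ve)$. For the linear system it is the map $\mA$ followed by this convolution.

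\textbf{Step 2: deconvolution is injective.} Since $\mA$ has full column rank, the law of $\mA\rvx$ determines the law of $\rvx$. The characteristic function of Gaussian noise never vanishes, so $\phi_{\rvy}=\phi_{\mA\rvx}\cdot\phi_{\rve}$ determines $\phi_{\mA\rvx}$ uniquely. Hence two latent models induce the same measured distribution under $I_k$ if and only if they induce the same latent distribution under $I_k$.

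\textbf{Step 3: gap argument for the score.} For each $k$, the expected log-likelihood equals $-H(p^{(k)})-\mathrm{KL}(p^{(k)}\,\|\,p(\cdot\mid\vtheta,\vphi^\ast,I_k))$. The entropy is finite by \cref{app-assume:finite-entropy}. By sufficient capacity (\cref{app-assume:sufficient-capacity}), $\gr^\ast$ and any supergraph of it achieve zero KL for all $k$. The penalty $\lambda|\gr|$ then favors the sparsest such graphs.

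Now take a graph $\gr$ that is not $\si$-Markov equivalent to $\gr^\ast$. Then for some $k$, $\tdo(I_k)(\gr)$ fails to entail a conditional independence that the true interventional latent distribution, faithful by \cref{app-assume:faithfulness}, either has or lacks. If $\gr$ is missing a required dependence, no $\vtheta$ reproduces $p^{(k)}$ on the latents. By Step 2, no $\vtheta$ reproduces it on the measurements either. I then need the infimum KL over $\vtheta$ to be strictly positive, not just nonzero for each $\vtheta$. For this I would use compactness of the parameter set (\cref{app-assume:parameter-compactness}), lower semicontinuity of KL in $\vtheta$, and positivity (\cref{app-assume:positivity}) to keep densities and log-likelihoods well behaved, so the infimum is attained.

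Choosing $\lambda$ smaller than the minimum of these gaps over the finitely many graphs, divided by $d^2$, rules out such graphs. Among graphs achieving zero KL, those with extra edges beyond the equivalence class pay at least $\lambda$ more. Faithfulness then ensures every zero-KL graph of minimal size is $\si$-Markov equivalent to $\gr^\ast$, so $\hat\gr\equiv_\si\gr^\ast$.

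\textbf{Main obstacle.} I expect the hardest step to be transporting faithfulness and Markov properties through the channel in Step 3. The measured $\rvy$ does not satisfy the latent conditional independences. The argument must therefore work entirely at the level of equality of distributions, via the injectivity of Step 2, rather than through independence tests on $\rvy$. It must also lean on the $\sigma$-separation results of \citet{bongers2021foundations} for cyclic graphs, which are valid under \cref{assume:forward-map-diffeo}.

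A secondary subtlety is the strict positivity of the infimum KL. Mixing with Gaussian noise could in principle make near-misses arbitrarily close. Compactness together with continuity of convolution in $\vtheta$ should rule this out.
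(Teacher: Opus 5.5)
Your Steps 1--2 are sound, and so is the compactness plus lower-semicontinuity device for making the KL gap strictly positive. That device is a sensible substitute for the conditional-mutual-information bound used in DCDI, which does not transfer because $\rvy$ does not satisfy the latent independences.

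\textbf{Main gap: the final step.} The gap is the sentence ``Faithfulness then ensures every zero-KL graph of minimal size is $\si$-Markov equivalent to $\gr^\ast$.'' Faithfulness plus the $\sigma$-Markov property of the fitted model gives only one-sided containment: every $\sigma$-separation of $\tdo(I_k)(\gr)$ holds in $\tdo(I_k)(\gr^\ast)$. To finish, you need that strict containment in some regime forces $|\gr|>|\gr^\ast|$.

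For DAGs this is the skeleton argument: adjacency holds iff the nodes are inseparable, and then you compare v-structures. Under $\sigma$-separation this fails, because two non-adjacent nodes in a common strongly connected component are never $\sigma$-separated. Concretely, take $\si=\{\emptyset,[4]\}$, which satisfies \cref{cond:channel-indentifiability-interventions}, together with
\begin{itemize}
\item $\gr^\ast=\{1\to2,\,2\to3,\,3\to4,\,1\to4\}$, and
\item $\gr=\{1\to2,\,2\to3,\,3\to4,\,4\to1\}$.
\end{itemize}
Observationally, $\gr^\ast$ entails $X_1\perp X_3\mid X_2$, whereas $\gr$ entails no $\sigma$-separation at all. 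Under $\tdo([4])$ both graphs are empty. So $\gr$ passes your containment test and is not $\si$-Markov equivalent to $\gr^\ast$. It also has the same number of edges, so if $\gr$ fits exactly, the penalty $\lambda|\gr|$ cannot break the tie.

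Ruling this out needs an ingredient your proposal does not supply. One option is a characterization of interventional $\sigma$-Markov equivalence that controls edge counts. The paper's proof leans on the characterization of \citet{bongers2021foundations}, extended to interventions, for this comparison; you cite that work only as background. The other option is a model-class argument that such a $\gr$ cannot reproduce every $p^{(k)}$ exactly, which would move it into your positive-gap case. Neither follows from faithfulness.

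\textbf{Secondary issue: the channel is not fixed inside the score.} $\score$ in \cref{eq:score-large-sample} takes the supremum over $\vphi$ as well as $\vtheta$. Step 1 concerns the plug-in estimator $\hat\vphi$ used by the algorithm, so it does not by itself fix the channel inside the score. A non-equivalent graph could a priori match the measurements using a different noise variance and a correspondingly different latent law. Step 2, which is injectivity for a fixed channel, says nothing about that case.

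You need to invoke \cref{cond:channel-indentifiability-interventions} inside the likelihood itself. At any exact fit, a coordinate $X_i$ intervened in $I_k$ has known variance $\sigma_\si^2$. Matching the law of $Y_i$ (resp.\ of $\zeta_i=\vt_i^\top\rvy$) under $I_k$ then forces $\sigma_i^2$ (resp.\ $\vt_i^\top\mD_\ve\vt_i$) to its true value. In the linear case this recovers $\mD_\ve$ via \cref{theorem:full-rank-possible}. Only after that does your deconvolution argument apply.
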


Here, we provide an overview of the key assumptions needed for \cref{theorem:consistency}, as well as proof sketch below. See \cref{app:theorem-consistency-proof} for the full assumptions as well as the complete proof. \Cref{app-assume:sufficient-capacity} ensures that the data-generating distribution lies within the model class, \cref{app-assume:faithfulness} guarantees that all the statistical independencies observed in the data is a result of $\sigma$-separation in the data generating graph~\citep{forre2017markov}. \Cref{app-assume:positivity,app-assume:finite-entropy} prevent the score from diverging to infinity. Finally, \cref{app-assume:parameter-compactness} ensures that the map from latent distribution to the observed distribution is injective. 

% Change this once the full proof is finalized
\begin{proof}[Proof (sketch)]
    Building on the characterization of general directed Markov equivalence class by \citep{bongers2021foundations}, extended to the interventional setting, we show that any graph outside this equivalence class has a strictly lower score than the ground truth graph $\gr^\ast$. This follows from the following two facts: (i) certain independencies present in the data are not captured by graphs outside the equivalence class, and (ii) each latent distribution results in a unique observed distribution. Combined with the expressiveness of the model class, this prevents such graphs from fitting the data properly.
\end{proof}

\section{Experiments}
\label{sec:experiments}

% Colors for the baselines and RECLAIM
\def\colorRECLAIM{blue}
\def\colorNODAGS{green!80!black}
\def\colorDCDI{orange}
\def\colorAnchored{purple}

% The code for RECLAIM is available in the supplementary materials inside the \texttt{code} folder. We plan to make the code public upon pulblication of this work. 

We evaluate RECLAIM on both synthetic and real-world datasets against three state-of-the-art baselines: NODAGS-Flow~\citep{sethuraman2023nodags}, DCDI~\citep{brouillard2020differentiable}, and Anchored-CI~\citep{saeed2020anchored}. Together, these baselines span the key axes of comparison: NODAGS-Flow handles cycles and interventions but not measurement error; DCDI handles interventions but assumes acyclicity and ignores measurement error; and Anchored-CI models measurement error but assumes acyclicity and operates on observational data only. Since Anchored-CI returns a CPDAG representing a Markov equivalence class, we evaluate it on observational data and report the best score across all graphs in the equivalence class.

\begin{figure}[t!]
    \centering
    \begin{minipage}{\textwidth}
        \centering
        \def\markerscale{0.8}
\begin{tikzpicture}
\def\opacitylevel{0.1}

% Table paths
\def\tableanchored{main-data/varying-sigma-gan/pgfplots_anchored_ci.csv}
\def\tabledcdi{main-data/varying-sigma-gan/pgfplots_dcdi.csv}
\def\tablenodags{main-data/varying-sigma-gan/pgfplots_nodags.csv}
\def\tablereclaim{main-data/varying-sigma-gan/pgfplots_reclaim_sir.csv}

\pgfplotsset{
    every axis/.style={
        width=6.2cm,
        height=4cm,
        grid, 
        grid style={
            dashed,
        },
        xlabel={Min noise st. dev ($\sigma_{\min}$)},
        xtick={0.1, 0.3, 0.5, 0.7, 0.9},
        tick label style={font=\footnotesize},
        label style={font=\footnotesize\bfseries},
        legend style={font=\footnotesize},
    }
}
\begin{groupplot}[
    group style={
        group size=2 by 1,
        horizontal sep=1.2cm,
    },
]

% ---- AUPRC ----
\nextgroupplot[
    ylabel={AUPRC},
    ymin=0, ymax=1.15,
    ytick={0,0.25,0.5,0.75,1.0},
    legend to name=varying_sigma_gan_legend,
    legend style={legend columns=4},
]
% Anchored-CI
\addplot[\colorAnchored, name path=aci_au, draw=none, forget plot]
    table[x=n_meas, y=auprc_upper, col sep=comma] {\tableanchored};
\addplot[\colorAnchored, name path=aci_al, draw=none, forget plot]
    table[x=n_meas, y=auprc_lower, col sep=comma] {\tableanchored};
\addplot[\colorAnchored, fill opacity=\opacitylevel, draw=none, forget plot]
    fill between[of=aci_au and aci_al];
\addplot[\colorAnchored, mark=*, mark options={solid, fill=\colorAnchored!100, draw=black, scale=\markerscale}]
    table[x=n_meas, y=auprc_mean, col sep=comma] {\tableanchored};
\addlegendentry{Anchored-CI}

% DCDI
\addplot[\colorDCDI, name path=dcdi_au, draw=none, forget plot]
    table[x=n_meas, y=auprc_upper, col sep=comma] {\tabledcdi};
\addplot[\colorDCDI, name path=dcdi_al, draw=none, forget plot]
    table[x=n_meas, y=auprc_lower, col sep=comma] {\tabledcdi};
\addplot[\colorDCDI, fill opacity=\opacitylevel, draw=none, forget plot]
    fill between[of=dcdi_au and dcdi_al];
\addplot[\colorDCDI, mark=*, mark options={solid, fill=\colorDCDI!100, draw=black, scale=\markerscale}]
    table[x=n_meas, y=auprc_mean, col sep=comma] {\tabledcdi};
\addlegendentry{DCDI}
% NODAGS
\addplot[\colorNODAGS, name path=nodags_au, draw=none, forget plot]
    table[x=n_meas, y=auprc_upper, col sep=comma] {\tablenodags};
\addplot[\colorNODAGS, name path=nodags_al, draw=none, forget plot]
    table[x=n_meas, y=auprc_lower, col sep=comma] {\tablenodags};
\addplot[\colorNODAGS, fill opacity=\opacitylevel, draw=none, forget plot]
    fill between[of=nodags_au and nodags_al];
\addplot[\colorNODAGS, mark=*, mark options={solid, draw=black, fill=\colorNODAGS, scale=\markerscale}]
    table[x=n_meas, y=auprc_mean, col sep=comma] {\tablenodags};
\addlegendentry{NODAGS}
% RECLAIM
\addplot[\colorRECLAIM, name path=reclaim_au, draw=none, forget plot]
    table[x=n_meas, y=auprc_upper, col sep=comma] {\tablereclaim};
\addplot[\colorRECLAIM, name path=reclaim_al, draw=none, forget plot]
    table[x=n_meas, y=auprc_lower, col sep=comma] {\tablereclaim};
\addplot[\colorRECLAIM, fill opacity=\opacitylevel, draw=none, forget plot]
    fill between[of=reclaim_au and reclaim_al];
\addplot[\colorRECLAIM, mark=*, mark options={solid, draw=black, fill=\colorRECLAIM, scale=\markerscale}]
    table[x=n_meas, y=auprc_mean, col sep=comma] {\tablereclaim};
\addlegendentry{RECLAIM (SIR)}

% ---- SHD ----
\nextgroupplot[
    ylabel={SHD},
    ytick={0,8,16,24},
]

% Anchored-CI
\addplot[\colorAnchored, name path=aci_au, draw=none, forget plot]
    table[x=n_meas, y=shd_upper, col sep=comma] {\tableanchored};
\addplot[\colorAnchored, name path=aci_al, draw=none, forget plot]
    table[x=n_meas, y=shd_lower, col sep=comma] {\tableanchored};
\addplot[\colorAnchored, fill opacity=\opacitylevel, draw=none, forget plot]
    fill between[of=aci_au and aci_al];
\addplot[\colorAnchored, mark=*, mark options={solid, fill=\colorAnchored!100, draw=black, scale=\markerscale}]
    table[x=n_meas, y=shd_mean, col sep=comma] {\tableanchored};

% DCDI
\addplot[\colorDCDI, name path=dcdi_su, draw=none, forget plot]
    table[x=n_meas, y=shd_upper, col sep=comma] {\tabledcdi};
\addplot[\colorDCDI, name path=dcdi_sl, draw=none, forget plot]
    table[x=n_meas, y=shd_lower, col sep=comma] {\tabledcdi};
\addplot[\colorDCDI, fill opacity=\opacitylevel, draw=none, forget plot]
    fill between[of=dcdi_su and dcdi_sl];
\addplot[\colorDCDI, mark=*, mark options={solid, draw=black, fill=\colorDCDI, , scale=\markerscale}]
    table[x=n_meas, y=shd_mean, col sep=comma] {\tabledcdi};
% NODAGS
\addplot[\colorNODAGS, name path=nodags_su, draw=none, forget plot]
    table[x=n_meas, y=shd_upper, col sep=comma] {\tablenodags};
\addplot[\colorNODAGS, name path=nodags_sl, draw=none, forget plot]
    table[x=n_meas, y=shd_lower, col sep=comma] {\tablenodags};
\addplot[\colorNODAGS, fill opacity=\opacitylevel, draw=none, forget plot]
    fill between[of=nodags_su and nodags_sl];
\addplot[\colorNODAGS, mark=*, mark options={solid, fill=\colorNODAGS, draw=black, , scale=\markerscale}]
    table[x=n_meas, y=shd_mean, col sep=comma] {\tablenodags};
% RECLAIM
\addplot[\colorRECLAIM, name path=reclaim_su, draw=none, forget plot]
    table[x=n_meas, y=shd_upper, col sep=comma] {\tablereclaim};
\addplot[\colorRECLAIM, name path=reclaim_sl, draw=none, forget plot]
    table[x=n_meas, y=shd_lower, col sep=comma] {\tablereclaim};
\addplot[\colorRECLAIM, fill opacity=\opacitylevel, draw=none, forget plot]
    fill between[of=reclaim_su and reclaim_sl];
\addplot[\colorRECLAIM, mark=*, mark options={solid, fill=\colorRECLAIM, draw=black, , scale=\markerscale}]
    table[x=n_meas, y=shd_mean, col sep=comma] {\tablereclaim};

\end{groupplot}

\node[above=0.2cm, font=\footnotesize\bfseries]
    at ($(group c1r1.north)!0.5!(group c2r1.north)$) {Varying $\sigma_{\min}$ (GAN)};

\end{tikzpicture}
\par\smallskip
% \pgfplotslegendfromname{varying_sigma_gan_legend}

        \vspace{0.7em}
        \def\markerscale{0.8}
\begin{tikzpicture}
\def\opacitylevel{0.1}

% Table paths
\def\tableanchored{main-data/varying-nodes-gan/pgfplots_anchored_ci.csv}
\def\tabledcdi{main-data/varying-nodes-gan/pgfplots_dcdi.csv}
\def\tablenodags{main-data/varying-nodes-gan/pgfplots_nodags.csv}
\def\tablereclaim{main-data/varying-nodes-gan/pgfplots_reclaim_sir.csv}

\pgfplotsset{
    every axis/.style={
        width=6.2cm,
        height=4cm,
        grid, 
        grid style={
            dashed,
        },
        xlabel={Num.\ latent nodes ($d$)},
        xtick={10,20,30,40,50},
        tick label style={font=\footnotesize},
        label style={font=\footnotesize\bfseries},
        legend style={font=\footnotesize},
    }
}
\begin{groupplot}[
    group style={
        group size=2 by 1,
        horizontal sep=1.2cm,
    },
]

% ---- AUPRC ----
\nextgroupplot[
    ylabel={AUPRC},
    ymin=0, ymax=1.15,
    ytick={0,0.25,0.5,0.75,1.0},
    legend to name=varying_nodes_legend,
    legend style={legend columns=4},
]
% Anchored-CI
\addplot[\colorAnchored, name path=aci_au, draw=none, forget plot]
    table[x=n_meas, y=auprc_upper, col sep=comma] {\tableanchored};
\addplot[\colorAnchored, name path=aci_al, draw=none, forget plot]
    table[x=n_meas, y=auprc_lower, col sep=comma] {\tableanchored};
\addplot[\colorAnchored, fill opacity=\opacitylevel, draw=none, forget plot]
    fill between[of=aci_au and aci_al];
\addplot[\colorAnchored, mark=*, mark options={solid, fill=\colorAnchored!100, draw=black, scale=\markerscale}]
    table[x=n_meas, y=auprc_mean, col sep=comma] {\tableanchored};
\addlegendentry{Anchored-CI}

% DCDI
\addplot[\colorDCDI, name path=dcdi_au, draw=none, forget plot]
    table[x=n_meas, y=auprc_upper, col sep=comma] {\tabledcdi};
\addplot[\colorDCDI, name path=dcdi_al, draw=none, forget plot]
    table[x=n_meas, y=auprc_lower, col sep=comma] {\tabledcdi};
\addplot[\colorDCDI, fill opacity=\opacitylevel, draw=none, forget plot]
    fill between[of=dcdi_au and dcdi_al];
\addplot[\colorDCDI, mark=*, mark options={solid, fill=\colorDCDI!100, draw=black, scale=\markerscale}]
    table[x=n_meas, y=auprc_mean, col sep=comma] {\tabledcdi};
\addlegendentry{DCDI}
% NODAGS
\addplot[\colorNODAGS, name path=nodags_au, draw=none, forget plot]
    table[x=n_meas, y=auprc_upper, col sep=comma] {\tablenodags};
\addplot[\colorNODAGS, name path=nodags_al, draw=none, forget plot]
    table[x=n_meas, y=auprc_lower, col sep=comma] {\tablenodags};
\addplot[\colorNODAGS, fill opacity=\opacitylevel, draw=none, forget plot]
    fill between[of=nodags_au and nodags_al];
\addplot[\colorNODAGS, mark=*, mark options={solid, draw=black, fill=\colorNODAGS, scale=\markerscale}]
    table[x=n_meas, y=auprc_mean, col sep=comma] {\tablenodags};
\addlegendentry{NODAGS}
% RECLAIM
\addplot[\colorRECLAIM, name path=reclaim_au, draw=none, forget plot]
    table[x=n_meas, y=auprc_upper, col sep=comma] {\tablereclaim};
\addplot[\colorRECLAIM, name path=reclaim_al, draw=none, forget plot]
    table[x=n_meas, y=auprc_lower, col sep=comma] {\tablereclaim};
\addplot[\colorRECLAIM, fill opacity=\opacitylevel, draw=none, forget plot]
    fill between[of=reclaim_au and reclaim_al];
\addplot[\colorRECLAIM, mark=*, mark options={solid, draw=black, fill=\colorRECLAIM, scale=\markerscale}]
    table[x=n_meas, y=auprc_mean, col sep=comma] {\tablereclaim};
\addlegendentry{RECLAIM (SIR)}

% ---- SHD ----
\nextgroupplot[
    ylabel={SHD},
    ytick={0,50,100,150},
]

% Anchored-CI
\addplot[\colorAnchored, name path=aci_au, draw=none, forget plot]
    table[x=n_meas, y=shd_upper, col sep=comma] {\tableanchored};
\addplot[\colorAnchored, name path=aci_al, draw=none, forget plot]
    table[x=n_meas, y=shd_lower, col sep=comma] {\tableanchored};
\addplot[\colorAnchored, fill opacity=\opacitylevel, draw=none, forget plot]
    fill between[of=aci_au and aci_al];
\addplot[\colorAnchored, mark=*, mark options={solid, fill=\colorAnchored!100, draw=black, scale=\markerscale}]
    table[x=n_meas, y=shd_mean, col sep=comma] {\tableanchored};

% DCDI
\addplot[\colorDCDI, name path=dcdi_su, draw=none, forget plot]
    table[x=n_meas, y=shd_upper, col sep=comma] {\tabledcdi};
\addplot[\colorDCDI, name path=dcdi_sl, draw=none, forget plot]
    table[x=n_meas, y=shd_lower, col sep=comma] {\tabledcdi};
\addplot[\colorDCDI, fill opacity=\opacitylevel, draw=none, forget plot]
    fill between[of=dcdi_su and dcdi_sl];
\addplot[\colorDCDI, mark=*, mark options={solid, draw=black, fill=\colorDCDI, , scale=\markerscale}]
    table[x=n_meas, y=shd_mean, col sep=comma] {\tabledcdi};
% NODAGS
\addplot[\colorNODAGS, name path=nodags_su, draw=none, forget plot]
    table[x=n_meas, y=shd_upper, col sep=comma] {\tablenodags};
\addplot[\colorNODAGS, name path=nodags_sl, draw=none, forget plot]
    table[x=n_meas, y=shd_lower, col sep=comma] {\tablenodags};
\addplot[\colorNODAGS, fill opacity=\opacitylevel, draw=none, forget plot]
    fill between[of=nodags_su and nodags_sl];
\addplot[\colorNODAGS, mark=*, mark options={solid, fill=\colorNODAGS, draw=black, , scale=\markerscale}]
    table[x=n_meas, y=shd_mean, col sep=comma] {\tablenodags};
% RECLAIM
\addplot[\colorRECLAIM, name path=reclaim_su, draw=none, forget plot]
    table[x=n_meas, y=shd_upper, col sep=comma] {\tablereclaim};
\addplot[\colorRECLAIM, name path=reclaim_sl, draw=none, forget plot]
    table[x=n_meas, y=shd_lower, col sep=comma] {\tablereclaim};
\addplot[\colorRECLAIM, fill opacity=\opacitylevel, draw=none, forget plot]
    fill between[of=reclaim_su and reclaim_sl];
\addplot[\colorRECLAIM, mark=*, mark options={solid, fill=\colorRECLAIM, draw=black, , scale=\markerscale}]
    table[x=n_meas, y=shd_mean, col sep=comma] {\tablereclaim};

\end{groupplot}

\node[above=0.2cm, font=\footnotesize\bfseries]
    at ($(group c1r1.north)!0.5!(group c2r1.north)$) {Varying Number of Nodes $d$ (GAN)};
\end{tikzpicture}
\par\smallskip\smallskip
\hspace{1.25cm}\pgfplotslegendfromname{varying_nodes_legend}
    \end{minipage}
    \caption{Performance comparison with varying minimum noise standard deviation ($\sigma$) (top), and varying number of latent nodes ($d$) (bottom) for \emph{Gaussian additive noise} system. Shaded regions show $\pm 1$ standard deviation over 10 trials.}
    \label{fig:results-gan}
\end{figure}
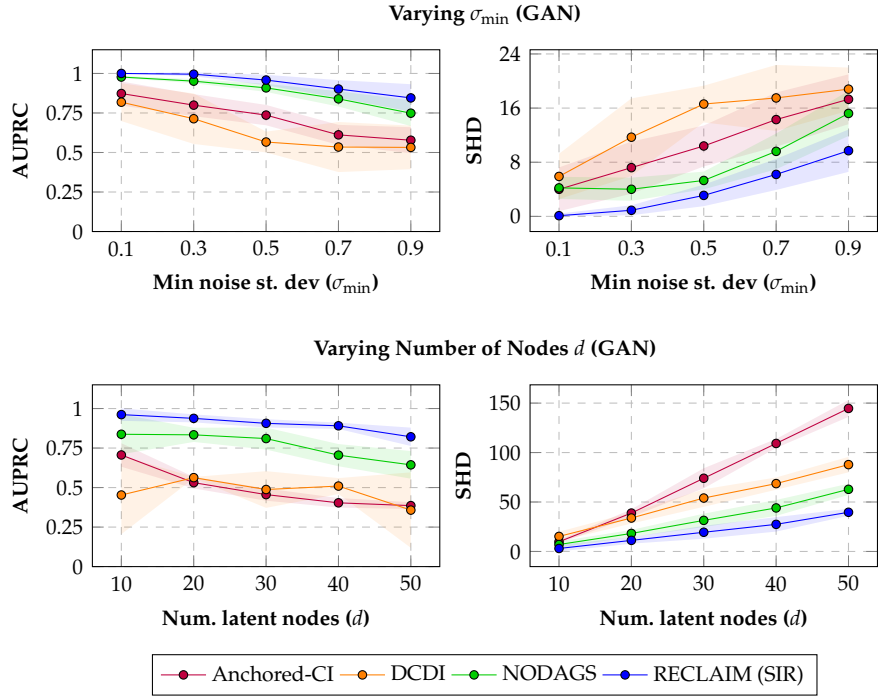

\subsection{Synthetic Experiments}

In all our synthetic experiments, cyclic graphs were generated using the Erd\H{o}s-R\'enyi (ER) random graph model with the expected outgoing edge density set to 2. RECLAIM and the baselines were evaluated on nonlinear SCMs with the latent variables generated using the following structural equations:
$$\vx = \tanh(\mW^\top \vx) + \vz,$$
where the weighted adjacency matrix is faithful to the graph generated using the ER model, and $0.2 \leq |W_{ij}| \leq 0.9$. The matrix $\mW$ was rescaled to ensure that the causal mechanism remains contractive. The training dataset consists of observational data and single node interventions over all the nodes in the graph, i.e., $\si = \emptyset \cup \{\{i\}\}_{i=1}^d$. For each interventional experiment $I_k \in \si$, the intervened nodes $X_i \in \sx_{I_k}$ were sampled from $\gauss(0, 1)$, with $N_k=1000$ samples per intervention. 

We report performance using two metrics: AUPRC (higher is better) and SHD (lower is better). AUPRC summarizes edge recovery quality across thresholds, while SHD counts the number of edge additions, deletions, and reversals needed to recover the ground truth graph. Since SHD requires a binary adjacency matrix, we threshold the estimated edge probabilities at 0.8.

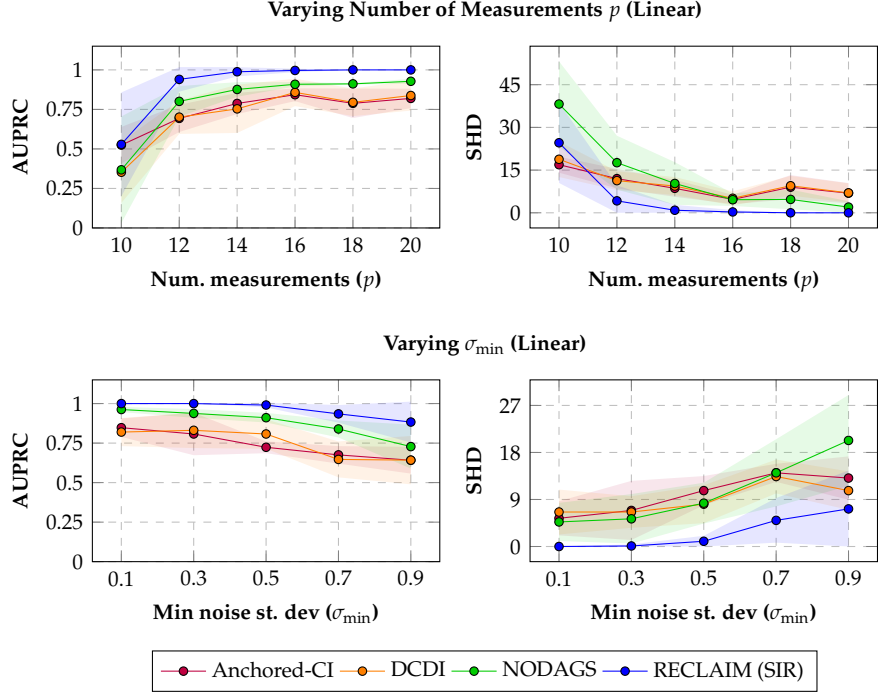
\begin{figure}[t!]
    \centering
    \def\markerscale{0.8}
\begin{tikzpicture}
\def\opacitylevel{0.1}

% Table paths
\def\tableanchored{main-data/varying-measurements-linear/pgfplots_anchored_ci.csv}
\def\tabledcdi{main-data/varying-measurements-linear/pgfplots_dcdi.csv}
\def\tablenodags{main-data/varying-measurements-linear/pgfplots_nodags.csv}
\def\tablereclaim{main-data/varying-measurements-linear/pgfplots_reclaim_sir.csv}

\pgfplotsset{
    every axis/.style={
        width=6.2cm,
        height=4cm,
        grid, 
        grid style={
            dashed,
        },
        xlabel={Num.\ measurements ($p$)},
        xtick={10,12,14,16,18,20},
        tick label style={font=\footnotesize},
        label style={font=\footnotesize\bfseries},
        legend style={font=\footnotesize},
    }
}
\begin{groupplot}[
    group style={
        group size=2 by 1,
        horizontal sep=1.2cm,
    },
]

% ---- AUPRC ----
\nextgroupplot[
    ylabel={AUPRC},
    ymin=0, ymax=1.15,
    ytick={0,0.25,0.5,0.75,1.0},
    legend to name=varying_meas_legend,
    legend style={legend columns=4},
]
% Anchored-CI
\addplot[\colorAnchored, name path=aci_au, draw=none, forget plot]
    table[x=n_meas, y=auprc_upper, col sep=comma] {\tableanchored};
\addplot[\colorAnchored, name path=aci_al, draw=none, forget plot]
    table[x=n_meas, y=auprc_lower, col sep=comma] {\tableanchored};
\addplot[\colorAnchored, fill opacity=\opacitylevel, draw=none, forget plot]
    fill between[of=aci_au and aci_al];
\addplot[\colorAnchored, mark=*, mark options={solid, fill=\colorAnchored!100, draw=black, scale=\markerscale}]
    table[x=n_meas, y=auprc_mean, col sep=comma] {\tableanchored};
\addlegendentry{Anchored-CI}

% DCDI
\addplot[\colorDCDI, name path=dcdi_au, draw=none, forget plot]
    table[x=n_meas, y=auprc_upper, col sep=comma] {\tabledcdi};
\addplot[\colorDCDI, name path=dcdi_al, draw=none, forget plot]
    table[x=n_meas, y=auprc_lower, col sep=comma] {\tabledcdi};
\addplot[\colorDCDI, fill opacity=\opacitylevel, draw=none, forget plot]
    fill between[of=dcdi_au and dcdi_al];
\addplot[\colorDCDI, mark=*, mark options={solid, fill=\colorDCDI!100, draw=black, scale=\markerscale}]
    table[x=n_meas, y=auprc_mean, col sep=comma] {\tabledcdi};
\addlegendentry{DCDI}
% NODAGS
\addplot[\colorNODAGS, name path=nodags_au, draw=none, forget plot]
    table[x=n_meas, y=auprc_upper, col sep=comma] {\tablenodags};
\addplot[\colorNODAGS, name path=nodags_al, draw=none, forget plot]
    table[x=n_meas, y=auprc_lower, col sep=comma] {\tablenodags};
\addplot[\colorNODAGS, fill opacity=\opacitylevel, draw=none, forget plot]
    fill between[of=nodags_au and nodags_al];
\addplot[\colorNODAGS, mark=*, mark options={solid, draw=black, fill=\colorNODAGS, scale=\markerscale}]
    table[x=n_meas, y=auprc_mean, col sep=comma] {\tablenodags};
\addlegendentry{NODAGS}
% RECLAIM
\addplot[\colorRECLAIM, name path=reclaim_au, draw=none, forget plot]
    table[x=n_meas, y=auprc_upper, col sep=comma] {\tablereclaim};
\addplot[\colorRECLAIM, name path=reclaim_al, draw=none, forget plot]
    table[x=n_meas, y=auprc_lower, col sep=comma] {\tablereclaim};
\addplot[\colorRECLAIM, fill opacity=\opacitylevel, draw=none, forget plot]
    fill between[of=reclaim_au and reclaim_al];
\addplot[\colorRECLAIM, mark=*, mark options={solid, draw=black, fill=\colorRECLAIM, scale=\markerscale}]
    table[x=n_meas, y=auprc_mean, col sep=comma] {\tablereclaim};
\addlegendentry{RECLAIM (SIR)}

% ---- SHD ----
\nextgroupplot[
    ylabel={SHD},
    ytick={0,15,30,45},
]

% Anchored-CI
\addplot[\colorAnchored, name path=aci_au, draw=none, forget plot]
    table[x=n_meas, y=shd_upper, col sep=comma] {\tableanchored};
\addplot[\colorAnchored, name path=aci_al, draw=none, forget plot]
    table[x=n_meas, y=shd_lower, col sep=comma] {\tableanchored};
\addplot[\colorAnchored, fill opacity=\opacitylevel, draw=none, forget plot]
    fill between[of=aci_au and aci_al];
\addplot[\colorAnchored, mark=*, mark options={solid, fill=\colorAnchored!100, draw=black, scale=\markerscale}]
    table[x=n_meas, y=shd_mean, col sep=comma] {\tableanchored};

% DCDI
\addplot[\colorDCDI, name path=dcdi_su, draw=none, forget plot]
    table[x=n_meas, y=shd_upper, col sep=comma] {\tabledcdi};
\addplot[\colorDCDI, name path=dcdi_sl, draw=none, forget plot]
    table[x=n_meas, y=shd_lower, col sep=comma] {\tabledcdi};
\addplot[\colorDCDI, fill opacity=\opacitylevel, draw=none, forget plot]
    fill between[of=dcdi_su and dcdi_sl];
\addplot[\colorDCDI, mark=*, mark options={solid, draw=black, fill=\colorDCDI, , scale=\markerscale}]
    table[x=n_meas, y=shd_mean, col sep=comma] {\tabledcdi};
% NODAGS
\addplot[\colorNODAGS, name path=nodags_su, draw=none, forget plot]
    table[x=n_meas, y=shd_upper, col sep=comma] {\tablenodags};
\addplot[\colorNODAGS, name path=nodags_sl, draw=none, forget plot]
    table[x=n_meas, y=shd_lower, col sep=comma] {\tablenodags};
\addplot[\colorNODAGS, fill opacity=\opacitylevel, draw=none, forget plot]
    fill between[of=nodags_su and nodags_sl];
\addplot[\colorNODAGS, mark=*, mark options={solid, fill=\colorNODAGS, draw=black, , scale=\markerscale}]
    table[x=n_meas, y=shd_mean, col sep=comma] {\tablenodags};
% RECLAIM
\addplot[\colorRECLAIM, name path=reclaim_su, draw=none, forget plot]
    table[x=n_meas, y=shd_upper, col sep=comma] {\tablereclaim};
\addplot[\colorRECLAIM, name path=reclaim_sl, draw=none, forget plot]
    table[x=n_meas, y=shd_lower, col sep=comma] {\tablereclaim};
\addplot[\colorRECLAIM, fill opacity=\opacitylevel, draw=none, forget plot]
    fill between[of=reclaim_su and reclaim_sl];
\addplot[\colorRECLAIM, mark=*, mark options={solid, fill=\colorRECLAIM, draw=black, , scale=\markerscale}]
    table[x=n_meas, y=shd_mean, col sep=comma] {\tablereclaim};

\end{groupplot}

\node[above=0.2cm, font=\footnotesize\bfseries]
    at ($(group c1r1.north)!0.5!(group c2r1.north)$) {Varying Number of Measurements $p$ (Linear)};

\end{tikzpicture}
\par\smallskip
% \pgfplotslegendfromname{varying_meas_legend}

    \vspace{0.2cm}
    \def\markerscale{0.8}
\begin{tikzpicture}
\def\opacitylevel{0.1}

% Table paths
\def\tableanchored{main-data/varying-sigma-linear/pgfplots_anchored_ci.csv}
\def\tabledcdi{main-data/varying-sigma-linear/pgfplots_dcdi.csv}
\def\tablenodags{main-data/varying-sigma-linear/pgfplots_nodags.csv}
\def\tablereclaim{main-data/varying-sigma-linear/pgfplots_reclaim_sir.csv}

\pgfplotsset{
    every axis/.style={
        width=6.2cm,
        height=4cm,
        grid, 
        grid style={
            dashed,
        },
        xlabel={Min noise st. dev ($\sigma_{\min}$)},
        xtick={0.1, 0.3, 0.5, 0.7, 0.9},
        tick label style={font=\footnotesize},
        label style={font=\footnotesize\bfseries},
        legend style={font=\footnotesize},
    }
}
\begin{groupplot}[
    group style={
        group size=2 by 1,
        horizontal sep=1.2cm,
    },
]

% ---- AUPRC ----
\nextgroupplot[
    ylabel={AUPRC},
    ymin=0, ymax=1.15,
    ytick={0,0.25,0.5,0.75,1.0},
    legend to name=varying_sigma_linear_legend,
    legend style={legend columns=4},
]
% Anchored-CI
\addplot[\colorAnchored, name path=aci_au, draw=none, forget plot]
    table[x=n_meas, y=auprc_upper, col sep=comma] {\tableanchored};
\addplot[\colorAnchored, name path=aci_al, draw=none, forget plot]
    table[x=n_meas, y=auprc_lower, col sep=comma] {\tableanchored};
\addplot[\colorAnchored, fill opacity=\opacitylevel, draw=none, forget plot]
    fill between[of=aci_au and aci_al];
\addplot[\colorAnchored, mark=*, mark options={solid, fill=\colorAnchored!100, draw=black, scale=\markerscale}]
    table[x=n_meas, y=auprc_mean, col sep=comma] {\tableanchored};
\addlegendentry{Anchored-CI}

% DCDI
\addplot[\colorDCDI, name path=dcdi_au, draw=none, forget plot]
    table[x=n_meas, y=auprc_upper, col sep=comma] {\tabledcdi};
\addplot[\colorDCDI, name path=dcdi_al, draw=none, forget plot]
    table[x=n_meas, y=auprc_lower, col sep=comma] {\tabledcdi};
\addplot[\colorDCDI, fill opacity=\opacitylevel, draw=none, forget plot]
    fill between[of=dcdi_au and dcdi_al];
\addplot[\colorDCDI, mark=*, mark options={solid, fill=\colorDCDI!100, draw=black, scale=\markerscale}]
    table[x=n_meas, y=auprc_mean, col sep=comma] {\tabledcdi};
\addlegendentry{DCDI}
% NODAGS
\addplot[\colorNODAGS, name path=nodags_au, draw=none, forget plot]
    table[x=n_meas, y=auprc_upper, col sep=comma] {\tablenodags};
\addplot[\colorNODAGS, name path=nodags_al, draw=none, forget plot]
    table[x=n_meas, y=auprc_lower, col sep=comma] {\tablenodags};
\addplot[\colorNODAGS, fill opacity=\opacitylevel, draw=none, forget plot]
    fill between[of=nodags_au and nodags_al];
\addplot[\colorNODAGS, mark=*, mark options={solid, draw=black, fill=\colorNODAGS, scale=\markerscale}]
    table[x=n_meas, y=auprc_mean, col sep=comma] {\tablenodags};
\addlegendentry{NODAGS}
% RECLAIM
\addplot[\colorRECLAIM, name path=reclaim_au, draw=none, forget plot]
    table[x=n_meas, y=auprc_upper, col sep=comma] {\tablereclaim};
\addplot[\colorRECLAIM, name path=reclaim_al, draw=none, forget plot]
    table[x=n_meas, y=auprc_lower, col sep=comma] {\tablereclaim};
\addplot[\colorRECLAIM, fill opacity=\opacitylevel, draw=none, forget plot]
    fill between[of=reclaim_au and reclaim_al];
\addplot[\colorRECLAIM, mark=*, mark options={solid, draw=black, fill=\colorRECLAIM, scale=\markerscale}]
    table[x=n_meas, y=auprc_mean, col sep=comma] {\tablereclaim};
\addlegendentry{RECLAIM (SIR)}

% ---- SHD ----
\nextgroupplot[
    ylabel={SHD},
    ytick={0,9,18,27},
]

% Anchored-CI
\addplot[\colorAnchored, name path=aci_au, draw=none, forget plot]
    table[x=n_meas, y=shd_upper, col sep=comma] {\tableanchored};
\addplot[\colorAnchored, name path=aci_al, draw=none, forget plot]
    table[x=n_meas, y=shd_lower, col sep=comma] {\tableanchored};
\addplot[\colorAnchored, fill opacity=\opacitylevel, draw=none, forget plot]
    fill between[of=aci_au and aci_al];
\addplot[\colorAnchored, mark=*, mark options={solid, fill=\colorAnchored!100, draw=black, scale=\markerscale}]
    table[x=n_meas, y=shd_mean, col sep=comma] {\tableanchored};

% DCDI
\addplot[\colorDCDI, name path=dcdi_su, draw=none, forget plot]
    table[x=n_meas, y=shd_upper, col sep=comma] {\tabledcdi};
\addplot[\colorDCDI, name path=dcdi_sl, draw=none, forget plot]
    table[x=n_meas, y=shd_lower, col sep=comma] {\tabledcdi};
\addplot[\colorDCDI, fill opacity=\opacitylevel, draw=none, forget plot]
    fill between[of=dcdi_su and dcdi_sl];
\addplot[\colorDCDI, mark=*, mark options={solid, draw=black, fill=\colorDCDI, , scale=\markerscale}]
    table[x=n_meas, y=shd_mean, col sep=comma] {\tabledcdi};
% NODAGS
\addplot[\colorNODAGS, name path=nodags_su, draw=none, forget plot]
    table[x=n_meas, y=shd_upper, col sep=comma] {\tablenodags};
\addplot[\colorNODAGS, name path=nodags_sl, draw=none, forget plot]
    table[x=n_meas, y=shd_lower, col sep=comma] {\tablenodags};
\addplot[\colorNODAGS, fill opacity=\opacitylevel, draw=none, forget plot]
    fill between[of=nodags_su and nodags_sl];
\addplot[\colorNODAGS, mark=*, mark options={solid, fill=\colorNODAGS, draw=black, , scale=\markerscale}]
    table[x=n_meas, y=shd_mean, col sep=comma] {\tablenodags};
% RECLAIM
\addplot[\colorRECLAIM, name path=reclaim_su, draw=none, forget plot]
    table[x=n_meas, y=shd_upper, col sep=comma] {\tablereclaim};
\addplot[\colorRECLAIM, name path=reclaim_sl, draw=none, forget plot]
    table[x=n_meas, y=shd_lower, col sep=comma] {\tablereclaim};
\addplot[\colorRECLAIM, fill opacity=\opacitylevel, draw=none, forget plot]
    fill between[of=reclaim_su and reclaim_sl];
\addplot[\colorRECLAIM, mark=*, mark options={solid, fill=\colorRECLAIM, draw=black, , scale=\markerscale}]
    table[x=n_meas, y=shd_mean, col sep=comma] {\tablereclaim};

\end{groupplot}

\node[above=0.2cm, font=\footnotesize\bfseries]
    at ($(group c1r1.north)!0.5!(group c2r1.north)$) {Varying $\sigma_{\min}$ (Linear)};

\end{tikzpicture}
\par\smallskip\smallskip
\hspace{1.25cm}\pgfplotslegendfromname{varying_sigma_linear_legend}

    \caption{Performance comparison with varying number of measurements ($p$) (top), and varying minimum noise standard deviation ($\sigma$) (bottom) for \emph{Linear measurement} system. Shaded regions show $\pm 1$ standard deviation over 10 trials.}
    \label{fig:results-linear}
\end{figure}

\subsubsection{Gaussian Additive Noise.}  

We now consider the Gaussian additive noise setting, where measurements are corrupted as described in \cref{eq:gan-measurements}. We evaluate all methods as a function of the minimum measurement noise scale $\sigma_{\min}$ and the number of latent nodes $d$, probing robustness and scalability respectively, with results shown in \cref{fig:results-gan}.

\paragraph{Sensitivity to measurement noise scale.} We fix $d=10$ and vary $\sigma_{\min}$ between 0.1 and 0.9, with $\sigma_{\max} = \sigma_{\min} + 0.3$. As shown in \cref{fig:results-gan} (top), all methods degrade as $\sigma_{\min}$ increases, but RECLAIM degrades more gracefully, the performance gap over the baselines widens with noise scale, highlighting the benefit of explicitly modeling measurement error. NODAGS-Flow matches the performance of RECLAIM when $\sigma_{\min} = 0.1$ as the influence of noise is very low at that scale. 

\paragraph{Impact of number of latent nodes.} We fix $\sigma_{\min} = 0.5$ and $\sigma_{\max} = 1$, and vary $d$ between 10 and 50. As shown in \cref{fig:results-gan} (bottom), performance degrades for all methods as $d$ increases, but the gap between cyclic methods (RECLAIM and NODAGS-Flow) and acyclic methods (DCDI and Anchored-CI) widens substantially, suggesting that the cost of misspecifying graph structure compounds with graph size.

\subsubsection{Linear Measurement System. }  

We now consider the linear measurement system setting, where measurements follow \cref{eq:linear-measurements} with $A_{ij} \sim \mathcal{N}(0, 1.5)$. We evaluate all methods as a function of the number of measurements $p$ and the minimum noise scale $\sigma_{\min}$, probing the effect of measurement redundancy and noise robustness respectively, with results shown in \cref{fig:results-linear}.

\paragraph{Impact of number of measurements.} We fix $d=10$, $\sigma_{\min}=0.3$, $\sigma_{\max} = 0.6$, and vary $p$ between 10 and 20. As shown in \cref{fig:results-linear} (top), all methods perform poorly at $p = d = 10$, but RECLAIM recovers sharply as $p$ increases, achieving near-perfect recovery for $p \geq 12$. This suggests that even modest measurement redundancy is sufficient for RECLAIM to disentangle the latent structure, whereas the baselines continue to struggle across the full range of $p$.

\paragraph{Sensitivity to measurement noise scale.} We fix $p = 15, d = 10$ and vary $\sigma_{\min}$ between 0.1 and 0.9, with $\sigma_{\max} = \sigma_{\min} + 0.3$. As shown in \cref{fig:results-linear} (bottom), the overall trend mirrors the Gaussian additive noise setting (all methods degrade with increasing noise), but RECLAIM degrades more gracefully than the baselines. Notably, RECLAIM is more robust here than in the Gaussian additive noise setting (AUPRC of 0.88 vs 0.84 at $\sigma_{\min} = 0.9$), which we attribute to the additional structure provided by the linear measurement system allowing for more accurate noise parameter estimation.

\begin{figure}[t]
    \centering
        \scalebox{1.0}{
            \scalebox{0.8}{
\begin{tikzpicture}[
    every node/.style={
        ellipse, 
        draw=black,
        fill=black!10,
        font=\scriptsize
    },
    myarrow,
    bend angle=15
]

% 11 nodes arranged in a circle (radius = 3cm)
% Node i is at angle: 90 - i * (360/11) degrees

% Nodes with biological labels
\node (0)  at ({90 - 0  * (360/11)}:2.5cm) {Raf};
\node (1)  at ({90 - 1  * (360/11)}:2.5cm) {Mek};
\node (2)  at ({90 - 2  * (360/11)}:2.5cm) {PLCg};
\node (3)  at ({90 - 3  * (360/11)}:2.5cm) {PIP2};
\node (4)  at ({90 - 4  * (360/11)}:2.5cm) {PIP3};
\node (5)  at ({90 - 5  * (360/11)}:2.5cm) {ERK};
\node (6)  at ({90 - 6  * (360/11)}:2.5cm) {Akt};
\node (7)  at ({90 - 7  * (360/11)}:2.5cm) {PKA};
\node (8)  at ({90 - 8  * (360/11)}:2.5cm) {PKC};
\node (9)  at ({90 - 9  * (360/11)}:2.5cm) {P38};
\node (10) at ({90 - 10 * (360/11)}:2.5cm) {PJNK};

% Edges (bidirectional pairs use bend to separate arrows)
% 0 <-> 1
\draw[bend left, blue]  (0) to (1);
\draw[bend left, blue]  (1) to (0);

% 2 -> 4, 3 -> 2, 3 -> 4, 4 -> 3
\draw[bend right, blue] (2) to (4);
\draw[blue] (3) to (2);
\draw[bend left, blue]  (3) to (4);
\draw[bend left, blue]  (4) to (3);

% 5 <-> 6
\draw[bend left, blue]  (5) to (6);
\draw[bend left, blue]  (6) to (5);

% 6 -> 7, 7 -> 5, 7 -> 6
\draw[bend left, blue]  (6) to (7);
\draw[bend left, blue]  (7) to (6);
\draw[bend left, blue] (7) to (5);

% 8 -> 9, 8 -> 10, 9 -> 8, 10 -> 9
\draw[bend left, blue]  (8) to (9);
\draw[bend left, blue]  (9) to (8);
\draw[bend right, blue] (8) to (10);
\draw[blue] (10) to (9);

\end{tikzpicture}
}
        }
        \caption{Estimated graph learnt from~\cite{sachs_causal_2005} dataset}
        \label{fig:sachs-graph}
\end{figure}
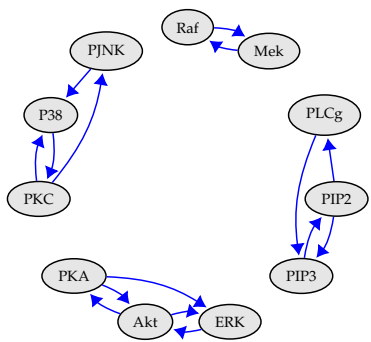

\subsection{Real-World Protein Signaling Dataset}

We evaluate RECLAIM and the baselines on the~\cite{sachs_causal_2005} protein signaling dataset, which consists of fluorescence intensity measurements of phosphorylated proteins and phospholipid components in human immune system cells across 13 interventional environments. These measurements are noisy proxies for true protein levels, corrupted by instrument and antibody staining noise~\citep{krutzik2004analysis}. Importantly, 11 proteins and phospholipids are measured, making this a natural testbed for causal discovery under measurement noise.

\begin{wraptable}{r}{0.35\textwidth}
    \centering
    % \vspace{cm}
    \caption{Performance with respect to SHD on~\citet{sachs_causal_2005} dataset}
    \vspace{0.1cm}
    \label{tab:sachs-results}
    \begin{tabular}{cc}
        \toprule
        \textbf{Method} & \textbf{SHD}\\
        \midrule 
        RECLAIM & 19\\
        NODAGS & 22\\
        DCDI & 21\\
        Anchored-CI & 19\\
        \bottomrule
    \end{tabular}
    \vspace{-0.2cm}
\end{wraptable}

We train RECLAIM on the first 9 interventional environments. Since the true variance of the intervened proteins is unavailable, we treat the measurement noise variance as a learnable parameter. The estimated signaling network after 200 epochs is shown in \cref{fig:sachs-graph}, with SHD scores relative to the~\citet{sachs_causal_2005} ground-truth reported in \cref{tab:sachs-results}.

The Raf$\leftrightarrow$Mek cycle recovered by RECLAIM is consistent with the well-documented negative feedback from ERK to Raf-1~\citep{sturm2011massive}, which DAG constrained methods structurally cannot recover. 
We note that the ground-truth graph from~\citet{sachs_causal_2005} is itself a DAG, despite feedback loops being known to exist in these systems~\citep{sturm2011massive}---meaning our SHD scores are conservative with respect to RECLAIM's true recovery performance.

\paragraph{Additional experiments.} Additionally, we also provide results in \cref{app:additional-experiments} for the following settings: (i) varying number of cycles in latent graph, (ii) varying degree on nonlinearity of latent SCM, and (iii) varying sparsity of the latent graph.

\section{Discussion and Conclusion}
\label{sec:discussions}

In this work, we introduced RECLAIM, a differentiable causal discovery framework that can jointly handle cyclic causal graphs, interventional data, and measurement noise in a unified probabilistic framework. Unlike existing methods that assume acyclicity or direct observation of system variables, RECLAIM operates on coarsened measurements across two noise settings: additive Gaussian, and linear measurement systems. We establish consistency of the graph estimator in the large-sample regime and demonstrate through experiments that RECLAIM outperforms state-of-the-art baselines, with particular gains in cyclic and high-noise regimes.

A key limitation is the reliance on interventional data for measurement noise parameter 
estimation---insufficient interventional coverage can degrade both noise estimation and 
graph recovery. Future directions include relaxing this requirement, extending support to 
soft and unknown interventional targets, non-Gaussian exogenous noise, and jointly 
learning the measurement system from data.

\subsection*{Acknowledgment}

This material is based on work supported by the National Science Foundation (NSF) under Grant no. 2502298. 

\bibliography{references}

\include{appendix}

\end{document}